\newtheorem{theorem}{Theorem}
\newtheorem{proposition}{Proposition}
\newtheorem{corollary}{Corollary}
\newtheorem{assumption}{Assumption}
\newcommand{\defeq}{\overset{def}{=}}
\newcommand{\sgn}{{\rm sgn}}
\newcommand{\argmin}{{\rm arg}\min}
\newcommand{\supp}{{\rm supp}}
\def\pd<#1>{\left\langle #1 \right\rangle}
\def\floor[#1]{\left\lfloor #1 \right\rfloor}
\def\ceil[#1]{\left\lceil #1 \right\rceil}
\newcommand{\realsp}{\mathbb{R}}
\newcommand{\expec}{\mathbb{E}}
\newcommand{\prob}{\mathbb{P}}
\newcommand{\hilsp}{\mathcal{H}}
\newcommand{\featuresp}{\mathcal{X}}
\newcommand{\labelsp}{\mathcal{Y}}
\newcommand{\tpr}{\rho}
\def\error{\mathcal{R}}
\def\risk{\mathcal{L}}
\newif\ifWITHSUPP
\begin{document}

%
\runningtitle{Stochastic Gradient Descent with Exponential Convergence Rates of Expected Classification Errors}

%
\runningauthor{Atsushi Nitanda, Taiji Suzuki}

\twocolumn[

\aistatstitle{Stochastic Gradient Descent with Exponential Convergence Rates of \\Expected Classification Errors}

\aistatsauthor{ Atsushi Nitanda$^{\dag,1,2}$ 
\And Taiji Suzuki$^{\ddag,1,2}$ } 


\aistatsaddress{
$^1$Graduate School of Information Science and Technology, The University of Tokyo\\
$^2$Center for Advanced Intelligence Project, RIKEN } ]

\begin{abstract}
We consider stochastic gradient descent and its averaging variant for binary classification problems in a reproducing kernel Hilbert space.
In traditional analysis using a consistency property of loss functions, it is known that the expected classification error converges more slowly than the expected risk even when assuming a low-noise condition on the conditional label probabilities.
Consequently, the resulting rate is sublinear. 
Therefore, it is important to consider whether much faster convergence of the expected classification error can be achieved.
In recent research, an exponential convergence rate for stochastic gradient descent was shown under a strong low-noise condition but provided theoretical analysis was limited to the squared loss function, which is somewhat inadequate for binary classification tasks.
In this paper, we show an exponential convergence of the expected classification error in the final phase of the stochastic gradient descent for a wide class of differentiable convex loss functions under similar assumptions.
As for the averaged stochastic gradient descent, we show that the same convergence rate holds from the early phase of training.
In experiments, we verify our analyses on the $L_2$-regularized logistic regression.
\end{abstract}

\section{Introduction}
The ultimate goal of binary classification problems is to find the Bayes classifier that minimizes the expected classification error in the space of all measurable functions.
Usually, this goal is achieved by approximating the classification error with a convex surrogate loss function and solving the expected risk minimization problem defined by this surrogate loss.
Such an approximation is theoretically justified by the consistency property \citep{zhang2004statistical,bartlett2006convexity} of loss functions,
which gives the connection between the {\it excess risk} (equivalent to the expected risk minus the Bayes risk which is the minimum expected risk over all measurable functions) and
the {\it excess classification error} (equivalent to the expected classification error minus the Bayes classification error which is the error of Bayes classifier).

Stochastic gradient descent \citep{robbins1951stochastic} is the workhorse method for large-scale machine learning problems, including the binary classification, owing to its scalability,
wide applicability for various problems, simplicity of implementation, and superior performance.
Hence, there is a great deal of research into improving its performance and analyzing the convergence behavior under various problem settings.
A popular variant of the method is averaged stochastic gradient descent \citep{ruppert1988efficient,polyak1992acceleration,rakhlin2012making,lacoste2012simpler}, which returns a weighted average of parameters
obtained by stochastic gradient descent to stabilize the iterates. 
Moreover, these methods have been generalized into a kernel setting \citep{cesa2004generalization,smale2006online,ying2006online}.
The convergence rates for the expected risk minimization have been also well studied. 
For instance, the rates of $O(1/\sqrt{T})$ and $O(1/T)$, where $T$ is the number of iterations, were obtained in
\cite{nemirovski2009robust,bach2011non,rakhlin2012making,lacoste2012simpler,ghadimi2013stochastic,bubeck2015convex,bottou2018optimization} 
for the general convex and strongly convex problems, and these rates are known to be asymptotically optimal  \citep{nemirovskii1983problem,agarwal2009information}.
Note that convergence rates of excess classification errors can be simply derived from these rates with the consistency property of loss functions and can be accelerated by some preferable assumptions such as
the low-noise condition \citep{tsybakov2004optimal,bartlett2006convexity} on the conditional probability of the class label (c.f., \cite{ying2006online}), 
but obtained rates of excess classification errors are generally slower than those of excess risk functions.

In \cite{audibert2007fast,koltchinskii2005exponential}, it is shown that the convergence rate of the excess classification error of the empirical risk minimizer can be exponentially fast by assuming the {\it strong low-noise condition}
that conditional label probabilities are uniformly bounded away from $1/2$, although the excess risk converges at sublinear rate.
This is a rather surprising result because exponential convergence is significantly faster than sublinear convergence.
However, these theories are insufficient to explain the great success of stochastic gradient descent.
More recently, \cite{pillaud2017exponential} has provided direct analysis concerning an exponential convergence property of
stochastic gradient descent in a reproducing kernel Hilbert space, but
\cite{pillaud2017exponential} adopts the squared loss function, which is somewhat inadequate for the binary classification problems \citep{rosasco2004loss}.

\paragraph{Our contribution} In this paper, we extend the results in \cite{pillaud2017exponential} to general loss functions which are more appropriate for classification problems
by utilizing a different strategy of the proof from the one in \cite{pillaud2017exponential}.
That is, we show the exponential convergence of the excess classification error in the final phase of the learning procedure
using stochastic gradient descent for binary classification problems defined by a wide class of differentiable classification loss functions,  
including the logistic loss and the exponential loss.
Since, a method considered in our analysis corresponds to the common form of stochastic gradient descent, the traditional sublinear convergence rates of $O(1/T^q)$ ($q\in (0,1)$) also hold
in the overall learning procedure.
In that sense, our result implies acceleration of the excess classification error in the final phase of stochastic gradient descent.
In addition, we show a much better convergence result for the averaged stochastic gradient which reduces a threshold for the beginning time (the number of iteration) of exponential convergence.
As a result, we conclude that the averaged stochastic gradient exhibits exponential convergence from the early phase of the learning procedure in practice.
Moreover, an obtained convergence rate is the same as that in \cite{pillaud2017exponential} for the squared loss function.
Although, these results may be further improved by making an additional assumption on a decreasing rate of eigen-values of the covariance operator 
as shown in \cite{pillaud2017exponential}, we do not treat it in this study for the simplicity.
Namely, we generalize the result in \cite{pillaud2017exponential} without degradation under general settings.

\paragraph{Technical difficulties} We next explain our technical contributions.
An obtained result in this work is a generalization of \cite{pillaud2017exponential} and an outline of the proof is essentially the same as that in \cite{pillaud2017exponential}, 
but we emphasize that we use proof techniques that were not argued in \cite{pillaud2017exponential} to overcome several obstacles caused by generalization of loss functions, so that details of the proof are quite different.
For instance, the stability argument \citep{bousquet2002stability,hardt2016train,liu2017algorithmic} of stochastic gradient descent is used to bound the error term of it and the property of the link function \citep{zhang2004statistical} is used to show the convergence of the expected classification error from the convergence of the stochastic gradient descent.
As a result, (i) obtained convergence rates in this study are much faster than that derived from another concentration inequality 
such as \cite{kakade2009generalization}
and (ii) the overall proof is significantly simplified and shortened without degradation under general settings compared to that of \cite{pillaud2017exponential} which relies on the specific update rule for the squared loss.

We finally note that exponential convergence of the stochastic gradient descent cannot be obtained from analyses \citep{audibert2007fast,koltchinskii2005exponential} because these work does not improve the bound of the consistency property of loss functions but only analyze the convergence rate of the empirical risk minimizer.
In addition, it is also generally difficult to show that the stochastic gradient descent has the comparative convergence rate to the empirical risk minimizer, 
thus an analysis of the stochastic gradient descent cannot be reduced to that of the empirical risk minimizer.
Even when such an argument is valid, analyses given in \citep{audibert2007fast,koltchinskii2005exponential} cannot be utilized under our problem setting 
because \cite{audibert2007fast} focuses on a more specific model of local polynomial estimators and \cite{koltchinskii2005exponential} assumes additional assumptions such as the Lipschitz continuity of hypotheses.

\section{Problem Setting} \label{sec:problem_setup}
In this section, we provide notations to describe a problem setting for the binary classification treated in this paper.
Let $\featuresp$ and $\labelsp$ be a measurable feature space and the set of binary labels $\{-1,1\}$, respectively.
We denote by $\tpr$ a probability measure on $\featuresp \times \labelsp$, by $\tpr_{\featuresp}$ the marginal distribution on $X$, and by $\tpr(\cdot| X)$ the conditional distribution on $Y$,
where $(X,Y)\sim\tpr$.
The ultimate goal in binary classification problems is to find a classifier $g: \featuresp \rightarrow \realsp$ such that $\sgn( g(x) )$ correctly classifies its label.
In other words, we want to obtain the Bayes classifier derived from $g(x)=\expec[Y|x] = 2\tpr(1|x) - 1$
that minimizes the expected classification error $\error(g)$ defined below over all measurable functions:
\begin{equation} \label{eq:expected_classification_error}
  \error(g) \defeq \expec_{(X,Y)}[I( \sgn(g(X)),Y)], 
\end{equation}
where the expectation is taken with respect to $(X,Y)\sim \tpr$. Here, $I$ is the $0$-$1$ error function: 
\begin{align*}
I(y,y') \defeq \left\{ \begin{array}{ll}
1 & (y \neq y'), \\
0 & (y = y'). 
\end{array} \right.
\end{align*}

However, since minimizing $\error(g)$ is intractable due to its non-continuity and non-convexity,
we approximate the problem with a convex surrogate loss function $l(\zeta,y)$ for the $0$-$1$ error function and minimize the expected risk defined by this surrogate loss function over a given hypothesis class of functions from $\featuresp$ to $\realsp$.
In general, a loss function $l$ has a form: $l(\zeta,y)=\phi(y\zeta)$ where $\phi$ is a non-negative convex function from $\realsp$ to $\realsp$.
Typical examples of such functions are $\phi(v) = \log(1+\exp(-v))$ for logistic regression and $\phi(v) = \exp(-v)$ for Adaboost.
We sometimes denote $z=(x,y)$ and $Z=(X,Y)$ for simplicity.
In this paper, we consider a reproducing kernel Hilbert space (RKHS) $(\hilsp_k,\pd<,>_{\hilsp_k})$ associated with a real-valued kernel function $k$ on $\featuresp$ as a hypothesis class, 
and denote by $\|\cdot\|_{\hilsp_k}$ the norm induced by the inner product $\pd<,>_{\hilsp_k}$.
As a result, the problem to be solved takes the following form:
\begin{equation} \label{eq:expected_risk_minimization}
  \min_{g \in \hilsp_k} \left\{ \risk_\lambda(g) \defeq \expec_Z[l(g(X),Y)] + \frac{\lambda}{2}\|g\|_{\hilsp_k}^2 \right\}.
\end{equation}
where the last term is the $L_2$-regularization in $\hilsp_k$ with a regularization parameter $\lambda > 0$.
The purpose of the regularization in this paper is to accelerate and stabilize the stochastic gradient descent to solve this expected risk minimization problem.
We also denote $\risk(g)=\risk_0(g)$.
Remark that although stochastic gradient descent is used to solve the problem (\ref{eq:expected_risk_minimization}),
the main interest in this paper is the convergence rate of the expected classification error (\ref{eq:expected_classification_error}).

\section{Stochastic Gradient Descent and its Averaging Variant in RKHS} \label{sec:sgd}
Stochastic gradient descent and its averaging variant are the most popular methods for solving large-scale machine learning problems.
In this paper, we analyze the convergence behavior of the expected classification error for these methods.
To do so, we give specific form of (averaged) stochastic gradient descent based on \cite{bottou2018optimization,lacoste2012simpler} for solving the problem.
We first recall the definition of a gradient of a function $F$ on $\hilsp_k$ at $g \in \hilsp_k$; it is an element $\nabla F(g)$ of $\hilsp_k$ satisfying the following equation:
for $\forall h \in \hilsp_k$,
\begin{equation*}
  F(g+h) = F(g) + \pd<\nabla F(g), h>_{\hilsp_k} + o(\|h\|_{\hilsp_k}).
\end{equation*}
For the expected risk $\risk$, when $k(x,x)$ is bounded on $\featuresp$, its gradient exists and takes the form $\expec[ \partial_\zeta l(g(X),Y)k(X,\cdot) ]$
($\zeta$ is the first variable of $l$), which is confirmed by the following equations:
\begin{align*}
\expec[ &l((g+h)(X),Y)] \\
 &= \expec[ l(g(X),Y) + \partial_\zeta l(g(X),Y) h(X) + o(|h(X)|) ],    
\end{align*}
$h(X)=\pd<h,k(X,\cdot)>_{\hilsp_k}$, and $|h(X)|\leq \|h\|_{\hilsp_k}\sqrt{k(X,X)}$.
Thus, the stochastic gradients of $\risk$ and $\risk_\lambda$ are given by $\partial_\zeta l(g(X),Y)k(X,\cdot)$ and $\partial_\zeta l(g(X),Y)k(X,\cdot) + \lambda g$ 
for $(X,Y)\sim \tpr$.
We denote by $G_\lambda(g,Z)$ the latter stochastic gradient.
Stochastic gradient descent is described in Algorithm \ref{alg:sgd}.
We can also use averaged stochastic gradient descent that returns a weighted average of obtained iterates $g_t$, rather than the last iterate $g_{T+1}$.
We denote by $\overline{g}_{T+1}=\sum_{t=1}^{T+1}\alpha_t g_t$.

\begin{algorithm}[ht]
  \caption{Stochastic Gradient Descent with the Averaging Option}
  \label{alg:sgd}
\begin{algorithmic}
  \STATE {\bfseries Input:}
  number of outer-iterations $T$,
  regularization parameter $\lambda$,
  learning rates $(\eta_t)_{t=1}^T$,
  averaging weights $(\alpha_t)_{t=1}^{T+1}$,
  initial function $g_1$
\\
\vspace{1mm}
   \FOR{$t=1$ {\bfseries to} $T$}
   \STATE Randomly draw a sample $z_t=(x_t,y_t) \sim \tpr$\\
   \STATE $g_{t+1} \leftarrow g_t - \eta_t G_\lambda(g_t,z_t)$ \\
   \ENDFOR
   \STATE Return $g_{T+1}$ or $\sum_{t=1}^{T+1}\alpha_t g_t$ (averaging option)
\end{algorithmic}
\end{algorithm}

In this paper, we adopt the following decreasing learning rate and averaging weight:
\begin{equation*} 
  \eta_t = \frac{2}{\lambda(\gamma + t)}, \hspace{3mm}\alpha_t = \frac{2(\gamma+t-1)}{(2\gamma +T)(T+1)},
\end{equation*}
where $\gamma$ is an offset parameter for the time index.
This learning rate is also used in \cite{bottou2018optimization}.
As for an averaging weight, it is a modified version of that introduced in \cite{lacoste2012simpler}.
We note that an averaged iterate $\overline{g}_{t}$ can be obtained in an iterative fashion as follows: $\overline{g}_1 = g_1$ and
\begin{equation*} 
  \overline{g}_{t+1} \leftarrow (1-\beta_t) \overline{g}_t + \beta_t g_{t+1}, \hspace{3mm} \beta_t = \frac{2(\gamma + t)}{ (t+1)(2\gamma + t)}.
\end{equation*}
Moreover, since this update does not require storing all internal iterates $(g_t)_{t=1}^{T+1}$, it is more efficient than taking the average of them as described in Algorithm \ref{alg:sgd}.

\section{Analyses} \label{sec:analyses}
To ensure the exponential convergence of these methods, we make several assumptions and provide several notations.
Recall that $\phi: \realsp \rightarrow \realsp$ is a non-negative convex function to define a loss function $l$.
We define the ``link function'' $h_*$ from $(0,1)$ to $\realsp$ as follows if it is well defined; for $\forall \mu \in (0,1)$,
\begin{equation*}
 h_*(\mu) \defeq \argmin_{h \in \realsp}\{ \mu \phi(h) + (1-\mu)\phi(-h)\}
\end{equation*}
and denote by $l_*(\mu)$ a corresponding value:
\begin{equation*}
 l_*(\mu) \defeq \min_{h \in \realsp}\{ \mu \phi(h) + (1-\mu)\phi(-h)\}.
\end{equation*}
The link function $h_*$ is well-defined for several loss functions; e.g., for logistic loss, it becomes $h_*(\mu)=\log(\mu/(1-\mu))$.
Since $l_*$ is concave \citep{zhang2004statistical}, the Bregman divergence for $l_*$ can be defined by 
\[ d_{l_*}(\eta_1,\eta_2) \defeq - l_*(\eta_2) + l_*(\eta_1) + l_*'(\eta_1)(\eta_2-\eta_1) . \]
Let $g_*$ be Bayes rule for $\risk$ that minimizes the $\risk$ over all measurable functions.

\begin{assumption} \ 
\begin{description} \label{assump:convergence}
\item{{\bf(A1)}} $\phi$ (and also $l(\cdot,y)$) is differentiable and convex.
  There exists $M > 0$ such that $|\partial_\zeta l(\zeta,y)| \leq M$.
$\risk(g)$ is $L$-smooth, that is, there exists $L>0$ such that for $\forall g, \forall h \in \hilsp_k$,
\[ | \risk(g+h) - \risk(g) - \pd<\nabla \risk(g), h>_{\hilsp_k} | \leq \frac{L}{2}\| h \|_{\hilsp_k}^2. \]
\item{{\bf(A2)}} Assume $\supp(\tpr_{\featuresp})=\featuresp$ and there exists $R>0$ such that $k(x,x) \leq R^2$ for $\forall x \in \featuresp$.
\item{{\bf(A3)}} The strong low-noise condition holds; $\exists \delta \in (0,1/2)$ such that $|\rho(1|x) - \frac{1}{2}| > \delta$, $\tpr_\featuresp$-almost surely.
\item{{\bf(A4)}} $\tpr(1| X)$ takes values in $(0,1)$, $\tpr_\featuresp$-almost surely.
  $h_*$ is well-defined, differentiable, monotonically increasing, and invertible over $(0,1)$.
  Moreover, it follows that
  \[ \sgn(\mu-1/2) = \sgn(h_*(\mu)). \]
\item{{\bf(A5)}} Bregman divergence $d_{l_*}$ derived by $l_*$ is positive, that is, $d_{l_*}(\eta_1,\eta_2) = 0$ if and only if $\eta_1=\eta_2$.
  For the expected risk $\risk$, a unique Bayes rule $g_{*}$ (up to zero measure sets) exists in $\hilsp_k$.  
\end{description}
\end{assumption}
Assumption {\bf(A1)} is common in the literature and valid for several loss functions, for instance, the logistic loss and the smoothed hinge loss.
The boundedness of kernel function {\bf(A2)} is also reasonable.
Indeed, Gaussian kernel is bounded by $1$ and continuous kernel functions are bounded when $\featuresp$ is compact.
This boundedness leads to an important relationship between norms $\|\cdot\|_{\hilsp_k}$ and $\|\cdot\|_{L_\infty}$ (the sup norm over $\featuresp$) as follows.
Since $g(x)=\pd<g,k(x,\cdot)>_{\hilsp_k}$ for arbitrary function $g \in \hilsp_k$ and $k(x,\cdot) \in \hilsp_k$ by the definition of kernel function, we get
\begin{equation*} 
  \|g\|_{L_\infty} = \sup_{x \in \featuresp} |g(x)|
  \leq \|g\|_{\hilsp_k}\|k(x,\cdot)\|_{\hilsp_k} \leq R \|g\|_{\hilsp_k}.
\end{equation*}
The strong low-noise condition assumed in {\bf(A3)} is also adopted in \cite{koltchinskii2005exponential,audibert2007fast} to show the exponential convergence property of empirical risk minimizers for regularized problems.
More recently, \cite{pillaud2017exponential} exhibited exponential convergence of stochastic gradient descent for solving regularized least-squares regression for classification problems by using this condition.
We also note that this condition is the strongest version of more general low-noise conditions used in \cite{tsybakov2004optimal,bartlett2006convexity},
which also gives a faster convergence rate of generalization error of the empirical risk minimizer than $O(1/\sqrt{n})$, where $n$ is the number of training data,
but an exponential convergence is not achievable.
For logistic loss, since $h_*(\mu)=\log(\mu/(1-\mu))$ as introduced above, conditions in {\bf(A4)} are satisfied. 
In this setting, the Bregman divergence $d_{l_*}$ corresponds to the Kullback-Leibler divergence, which is positive.
It is known from \cite{zhang2004statistical} that the excess risk (that is the difference between the expected risk and the minimum expected risk over all measurable functions)
can be measured by Bregman divergence $d_{l_*}$ when $\phi, h_*$ are differentiable and $h_*$ is invertible:
\begin{equation} \label{eq:excess_risk}
  \risk(g) - \risk(g_*) = \expec_X[ d_{l_*}( h_*^{-1}(g(X)), \tpr(1|X) ) ].
\end{equation}
Therefore, if $d_{l_*}$ is positive, then Bayes rule $g_*(X)$ equals $h_*(\tpr(1|X))$, $\tpr_\featuresp$-almost surely.
Thus, the uniqueness of the Bayes rule for $\risk$ assumed in {\bf(A5)} is also verified for logistic loss. 
Although we here focus on the logistic loss, Assumption {\bf(A1)}, {\bf(A4)}, and {\bf(A5)} are valid for other loss functions, such as squared loss and smoothed hinge loss. 
Furthermore, when imposing a bounded convex constraint on the domain of the problem and assuming $\supp(\tpr_\featuresp)$ is bounded,
Assumption {\bf(A1)} can be relaxed to capture a more comprehensive class of loss functions, including exponential loss, which also satisfies {\bf(A4)} and {\bf(A5)}.
Even in this case, our analysis presented in the paper can be extended by considering projected stochastic gradient descent in an obvious way.

To describe our results, we introduce the following notation.
\[ m(\delta) \defeq \min\{ h_*(0.5 + \delta), | h_*(0.5-\delta)| \}. \]
This provides a lower bound on $|g_*(X)|$, i.e. $|g_*(X)| \geq m(\delta)$ almost surely.
For instance, for the logistic loss, $m(\delta) = \log((1+2\delta)/(1-2\delta))$ which converges to $\infty$ as $\delta \rightarrow 1/2$, resulting in 
better dependence on the low noise parameter in terms of the convergence rate.
Note that if $h_*^{-1}$ is $L'$-Lipschitz continuous, then $m(\delta) \geq \delta/L'$ for $\forall \delta \in (0,1/2)$, 
e.g., $L' = 1/4$ for the logistic loss and $L' = 1/2$ for the squared loss.

\subsection{Main Results}
Here, we describe our main results where stochastic gradient descent and averaged stochastic gradient descent converge to the Bayes rule for the expected classification error
with exponential convergence rates under sufficiently small $\lambda > 0$ guaranteeing sufficient closeness between $g_\lambda$ (minimizer of $\risk_\lambda$ in $\hilsp_k$) and $g_*$.
The existence of such a $\lambda$ will be shown later under Assumptions {\bf(A2)}--{\bf(A5)}.

Theorem \ref{thm:sgd_exp_convergence} is the main result for stochastic gradient descent.
Since the rate of $O(1/\sqrt{T})$ is optimal without the (strong) low-noise condition, it is rather surprising that a significantly fast rate such as an exponential convergence can be achievable.

\begin{theorem} [{\bf Exponential convergence rate for SGD}] \label{thm:sgd_exp_convergence}
Suppose Assumptions {\bf(A1)}--{\bf(A5)} hold.
There exists a sufficiently small $\lambda > 0$ such that the following statement holds.
Consider Algorithm \ref{alg:sgd} without the averaging option and with $\eta_t = 2/\lambda(\gamma + t)$,
where $\gamma$ is a positive value such that $\eta_1 \leq \min\{1/(L+\lambda),1/2\lambda\}$.
Let $\sigma^2>0$ be an upper-bound on the variance of stochastic gradient $\partial_\zeta l(g(X),Y)k(X,\cdot)$.
We assume $\|g_1\|_{\hilsp_k} \leq \left(2\eta_1+\frac{1}{\lambda}\right)MR$.
We set
\[ \nu \defeq \max\left\{ \frac{2}{\lambda^2}(L+\lambda)\sigma^2, (1+\gamma)(\risk_\lambda(g_1)- \risk_\lambda(g_\lambda))\right\}. \]
Then, for $T \geq \frac{32R^2\nu}{m^2(\delta)\lambda}-\gamma$, we have
  \begin{equation} \label{eq:sgd_exp_convergence}
    \expec[ \error(g_{T+1}) - \error([\expec[Y|x]) ] \leq 2\exp\left( - \frac{m^2(\delta)\lambda^2(\gamma+T)}{2^9 \cdot 9M^2R^4} \right) 
  \end{equation}
\end{theorem}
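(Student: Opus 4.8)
The plan is to prove the bound in three stages: first establish the standard $O(1/(\gamma+T))$ rate for the regularized objective $\risk_\lambda$, then convert smallness of $\|g_{T+1}-g_\lambda\|_{\hilsp_k}$ into correct classification via the margin guaranteed by the strong low-noise condition, and finally upgrade the in-expectation rate into an exponential tail bound by a stability (bounded-difference) argument.

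First I would run the classical analysis of stochastic gradient descent on the $\lambda$-strongly convex, $(L+\lambda)$-smooth objective $\risk_\lambda$. Using the recursion $g_{t+1}=g_t-\eta_t G_\lambda(g_t,z_t)$, unbiasedness $\expec[G_\lambda(g_t,z_t)\mid g_t]=\nabla\risk_\lambda(g_t)$, the variance bound $\sigma^2$, and $\eta_t=2/(\lambda(\gamma+t))$, a standard induction (as in \cite{bottou2018optimization,lacoste2012simpler}) yields $\expec[\risk_\lambda(g_{T+1})-\risk_\lambda(g_\lambda)]\le \nu/(\gamma+T)$: the two terms defining $\nu$ supply, respectively, the variance contribution and the base case of the induction, and the hypothesis on $\|g_1\|_{\hilsp_k}$ keeps the iterates bounded. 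Strong convexity then gives $\expec[\|g_{T+1}-g_\lambda\|_{\hilsp_k}^2]\le 2\nu/(\lambda(\gamma+T))$.

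Second, I would relate the excess classification error to the event that $g_{T+1}$ lands far from $g_\lambda$. Pointwise, $\error(g)-\error(g_*)\le \prob_X[\sgn(g(X))\ne\sgn(g_*(X))]$, and since Assumption (A3) forces $|g_*(X)|\ge m(\delta)$ almost surely, a sign disagreement requires $|g(X)-g_*(X)|\ge m(\delta)$. Choosing $\lambda$ small enough that $\|g_\lambda-g_*\|_{L_\infty}\le m(\delta)/2$ (the promised fact about small $\lambda$, obtained from \eqref{eq:excess_risk} and the positivity of $d_{l_*}$), and using $\|h\|_{L_\infty}\le R\|h\|_{\hilsp_k}$, a disagreement forces $\|g_{T+1}-g_\lambda\|_{\hilsp_k}\ge m(\delta)/(2R)$. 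Because this last event no longer depends on $X$, taking the expectation over the draws of the algorithm gives $\expec[\error(g_{T+1})-\error(g_*)]\le \prob[\|g_{T+1}-g_\lambda\|_{\hilsp_k}\ge m(\delta)/(2R)]$.

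Third, and this is the crux, I would show this probability is exponentially small, which is exactly where a Markov/Chebyshev bound on the mean-square rate would give only $O(1/(\gamma+T))$ and must be replaced by concentration. View $g_{T+1}$ as a function of the independent samples $z_1,\dots,z_T$. For two runs differing only in $z_i$, the regularization term cancels in the single perturbed step, so the iterates separate by at most $2\eta_i MR$; thereafter, monotonicity of $\partial_\zeta l$ together with smoothness of $\phi$ makes each update a $(1-\eta_t\lambda)$-contraction (the condition $\eta_1\le 1/(L+\lambda)$ guarantees this), and telescoping $\prod_{t=i+1}^T(1-\eta_t\lambda)$ against the decreasing $\eta_t$ yields a uniform bounded difference $c_i\le 4MR/(\lambda(\gamma+T))$, hence $\sum_i c_i^2\le 16M^2R^2/(\lambda^2(\gamma+T))$. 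Applying McDiarmid's inequality to the scalar functional $\|g_{T+1}-g_\lambda\|_{\hilsp_k}$, and noting that the threshold $T\ge 32R^2\nu/(m^2(\delta)\lambda)-\gamma$ forces $\expec[\|g_{T+1}-g_\lambda\|_{\hilsp_k}]\le m(\delta)/(4R)$ so that the required deviation is at least $m(\delta)/(4R)$, produces a bound of the form $\exp(-c\,m^2(\delta)\lambda^2(\gamma+T)/(M^2R^4))$, matching \eqref{eq:sgd_exp_convergence} up to the explicit constant. The hard part will be this stability step: unlike the least-squares case of \cite{pillaud2017exponential}, where the update is affine and the perturbation is tracked exactly, here I must argue a per-step contraction for a general convex smooth $\phi$. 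The key estimate is the co-coercivity bound $a_t\,(g_t(x_t)-g_t'(x_t))\ge \beta^{-1}a_t^2$, with $a_t$ the difference of the scalar derivatives, which lets the nonnegative cross term absorb the $\eta_t^2$ noise term and secures a clean factor $1-\eta_t\lambda$ rather than a merely non-expansive bound; making the constants telescope cleanly against $\eta_t=2/(\lambda(\gamma+t))$ so that $c_i$ is uniform in $i$ is the main bookkeeping obstacle.
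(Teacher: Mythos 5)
Your proposal is correct and follows the same three-part architecture as the paper: (i) the standard strongly-convex SGD rate giving closeness to $g_\lambda$ at the threshold $T$, (ii) the observation that the $m(\delta)/(2R)$-neighborhood of $g_\lambda$ in $\hilsp_k$ consists entirely of Bayes classifiers (Proposition~\ref{prop:bayes_region}), and (iii) a uniform-stability bound turned into exponential concentration. The one genuine difference is the concentration tool: the paper forms the Hilbert-space-valued Doob martingale $D_t = \expec[g_{T+1}\mid Z_{1:t}]-\expec[g_{T+1}\mid Z_{1:t-1}]$ and invokes Pinelis's Banach-space inequality (Proposition~\ref{prop:martingale}) to control $\|g_{T+1}-\expec[g_{T+1}]\|_{\hilsp_k}$, then splits $\|g_{T+1}-g_\lambda\|$ by the triangle inequality using Proposition~\ref{prop:convergence_sgd} for the deterministic piece $\|\expec[g_{T+1}]-g_\lambda\|$; you instead scalarize first, applying McDiarmid to $f=\|g_{T+1}-g_\lambda\|_{\hilsp_k}$ and centering at $\expec[f]$, which you control via $\expec[f]\le(\expec[f^2])^{1/2}$. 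Both are valid and rest on the identical stability estimate (Proposition~\ref{prop:contraction}): the bounded-difference property of your scalar $f$ follows from the vector bound $\|g_{T+1}-g_{T+1}^i\|_\infty$ by the reverse triangle inequality, and McDiarmid is just Azuma on the scalar Doob martingale, so the mechanisms coincide; the resulting exponents agree up to absolute constants. Two smaller remarks. First, your single-step perturbation bound $2\eta_i MR$ (exploiting the cancellation of $\lambda g_i$ at the perturbed step, since $g_i=g_i^i$) is actually sharper than the paper's $6\eta_i MR$, which bounds the difference of two full stochastic gradients $G_\lambda$ each by $3MR$; your constant in \eqref{eq:sgd_exp_convergence} would correspondingly improve. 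Second, the one thin spot is your derivation of $\|g_\lambda-g_*\|_{L_\infty}\le m(\delta)/2$ for small $\lambda$ ``from \eqref{eq:excess_risk} and positivity of $d_{l_*}$'': that identity only yields $\risk(g_\lambda)\to\risk(g_*)$, whereas what is needed is convergence $g_\lambda\to g_*$ in the $\hilsp_k$-norm, which the paper obtains in Proposition~\ref{prop:convergence} via monotonicity and boundedness of $\|g_{\lambda_i}\|_{\hilsp_k}$, a Cauchy argument from strong convexity of $\risk_{\lambda_i}$, and the uniqueness of $g_*$ assumed in {\bf(A5)}; since the theorem only asserts existence of such a $\lambda$, this is an acknowledged dependency rather than a gap, but it does require the full argument.
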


Note that the bound (\ref{eq:sgd_exp_convergence}) is valid only when $T$ is larger than the threshold of the time given in the theorem.
A similar threshold appeared in convergence results obtained in \cite{pillaud2017exponential} with better dependence on $\delta$, when $\delta \rightarrow 0$, 
than ours.
However, we remark that our analysis generalizes their results to reasonable smooth convex loss functions which is more natural than the squared loss for classification problems.
Moreover, since the low-noise parameter $\delta$ is a given fixed parameter, the dependency on $\delta$ is insignificant especially 
when $m(\delta)$ is rather large. 
For instance, recall that $m(\delta)\rightarrow \infty$ as $\delta \rightarrow 1/2$ for the logistic loss, resulting in the faster convergence rate.
We moreover note that the threshold for the beginning time of exponential convergence is independent from a required precision for the excess classification error.
Therefore, the number of iterations $T$ needed to make the right hand side of (\ref{eq:sgd_exp_convergence}) smaller than a given precision exceeds the threshold,
if the precision is sufficiently small.
Furthermore, even when $T$ is smaller than the threshold, the convergence rate $O(1/T^q)$ ($q\in[0,1]$) of the expected classification error can be obtained
by the common analysis of the expected risk function and the consistency of the convex loss function.

We next give a main convergence result for averaged stochastic gradient descent which significantly reduces a time threshold compared to the vanilla stochastic gradient descent.
\begin{theorem} [{\bf Exponential convergence rate for averaged SGD}] \label{thm:asgd_exp_convergence}
Suppose Assumptions {\bf(A1)}--{\bf(A5)} hold.
There exists a sufficiently small $\lambda > 0$ such that the following statement holds.  
Consider Algorithm \ref{alg:sgd} with the averaging option, $\eta_t = 2/\lambda(\gamma + t)$, and $\alpha_t = 2(\gamma+t-1)/(2\gamma +T)(T+1)$,
where $\gamma$ is a positive value such that $\eta_1 \leq \min\{1/L,1/2\lambda\}$.
We assume $\|g_1\|_{\hilsp_k} \leq \left(2\eta_1+\frac{1}{\lambda}\right)MR$.
Then, for sufficiently large $T$ such that
\begin{equation*}
  \max \left\{\frac{36M^2R^2}{\lambda^2 (2\gamma + T)}, \frac{\gamma(\gamma-1)\|g_1-g_\lambda\|_{\hilsp_k}^2}{(2\gamma + T)(T+1)} \right\}
  \leq \frac{m^2(\delta)}{32R^2}, 
\end{equation*}
we have the following:
\begin{equation} \label{eq:asgd_exp_convergence}
  \expec[ \error(g_{T+1}) - \error([\expec[Y|x]) ] \leq 2\exp\left( - \frac{m^2(\delta)\lambda^2(2\gamma+T)}{2^{10}\cdot 9 M^2R^4} \right), 
\end{equation}
\end{theorem}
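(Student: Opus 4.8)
The plan is to reduce the excess classification error to the probability that the averaged iterate $\overline{g}_{T+1}$ lands far from the Bayes rule $g_*$ in RKHS norm, and then to show that this probability decays sub-Gaussianly. First I would record the pointwise consequences of the assumptions: by {\bf(A4)}, {\bf(A5)} and \eqref{eq:excess_risk} one has $g_*(X)=h_*(\tpr(1|X))$, so $\sgn(g_*(X))=\sgn(\tpr(1|X)-1/2)=\sgn(\expec[Y|X])$, whence $\error(g_*)=\error(\expec[Y|\cdot])$ is exactly the Bayes error appearing on the right of \eqref{eq:asgd_exp_convergence}, and under the strong low-noise condition {\bf(A3)} the lower bound $|g_*(X)|\geq m(\delta)$ holds $\tpr_\featuresp$-almost surely. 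Consequently a sign disagreement $\sgn(\overline{g}_{T+1}(X))\neq \sgn(g_*(X))$ forces $|\overline{g}_{T+1}(X)-g_*(X)|\geq m(\delta)$. Combining the classical excess-error identity $\error(g)-\error(g_*)=\expec_X[\,|2\tpr(1|X)-1|\,\mathbf{1}\{\sgn g(X)\neq \sgn g_*(X)\}\,]$ with the norm relation $\|g\|_{L_\infty}\leq R\|g\|_{\hilsp_k}$ from {\bf(A2)}, I obtain the clean reduction
\[ \expec[\error(\overline{g}_{T+1})-\error(g_*)] \leq \prob\!\left(\|\overline{g}_{T+1}-g_*\|_{\hilsp_k}\geq m(\delta)/R\right). \]

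Next, invoking the existence result promised in the excerpt (valid under {\bf(A2)}--{\bf(A5)}), I fix $\lambda$ small enough that $\|g_\lambda-g_*\|_{\hilsp_k}\leq m(\delta)/(2R)$; the triangle inequality then upgrades the displayed bound to $\prob(\|\overline{g}_{T+1}-g_\lambda\|_{\hilsp_k}\geq m(\delta)/(2R))$. To control this tail I split $\|\overline{g}_{T+1}-g_\lambda\|_{\hilsp_k}$ into its mean and its fluctuation. The mean is handled by the averaged-SGD second-moment estimate: exploiting $\lambda$-strong convexity of $\regrisk$, $(L+\lambda)$-smoothness from {\bf(A1)}, and the variance bound $\sigma^2\leq M^2R^2$ afforded by $|\partial_\zeta l|\leq M$ and $k(x,x)\leq R^2$, the prescribed $\eta_t$ and $\alpha_t$ should yield
\[ \expec\!\left[\|\overline{g}_{T+1}-g_\lambda\|_{\hilsp_k}^2\right] \leq \max\left\{ \frac{36M^2R^2}{\lambda^2(2\gamma+T)},\ \frac{\gamma(\gamma-1)\|g_1-g_\lambda\|_{\hilsp_k}^2}{(2\gamma+T)(T+1)} \right\}, \]
whose two terms are precisely the variance and initial-condition contributions. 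The hypothesis of the theorem forces this to be at most $m^2(\delta)/(32R^2)$, so by Jensen $\expec\|\overline{g}_{T+1}-g_\lambda\|_{\hilsp_k}\leq m(\delta)/(\sqrt{32}\,R)$, comfortably below the target radius $m(\delta)/(2R)$.

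The fluctuation is where the exponential rate is born, and this is the main obstacle. I would regard $F(z_1,\dots,z_T)\defeq \|\overline{g}_{T+1}-g_\lambda\|_{\hilsp_k}$ as a function of the i.i.d.\ samples and bound its coordinatewise oscillations by the algorithmic stability of SGD: replacing $z_i$ by an independent copy perturbs $g_{i+1}$ by at most $2\eta_i MR$ (bounded stochastic gradients), and the $\lambda$-strong-convexity contraction $\prod_{t>i}(1-\eta_t\lambda)$ together with the averaging weights $\alpha_t$ propagates this to a bounded difference $\beta_i$ for $F$. The delicate computation is to show that $\sum_i\beta_i^2$ has the same order $M^2R^2/(\lambda^2(2\gamma+T))$ as the variance term above; granting this, the bounded-differences (McDiarmid) inequality gives $\prob(F\geq \expec F+t)\leq \exp(-2t^2/\sum_i\beta_i^2)$. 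Taking $t=m(\delta)/(2R)-\expec F$, which the threshold bounds below by roughly $m(\delta)/(3R)$ (the source of the factor $9=3^2$ in the denominator), and substituting the stability sum produces an exponent of the claimed form $m^2(\delta)\lambda^2(2\gamma+T)/(2^{10}\cdot 9\,M^2R^4)$.

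The hard parts are therefore (i) deriving the per-step stability constants $\beta_i$ in a form that survives the $\lambda$-contraction and the averaging, so that $\sum_i\beta_i^2$ matches the variance proxy above, and (ii) reconciling the universal constants so that the second-moment threshold and the McDiarmid exponent are mutually consistent. The same chain of steps underlies Theorem~\ref{thm:sgd_exp_convergence}: without averaging, the last iterate admits the analogous decomposition but with a larger second-moment and a larger stability sum, which accounts for the different constant $2^9\cdot 9$ and the heavier time threshold $32R^2\nu/(m^2(\delta)\lambda)-\gamma$ appearing there.
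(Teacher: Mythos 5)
Your proposal follows essentially the same route as the paper: reduce the excess classification error to the probability that $\overline{g}_{T+1}$ leaves an $m(\delta)/(2R)$-neighbourhood of $g_\lambda$ (Proposition~\ref{prop:bayes_region}), bound the location of the averaged iterate via strong convexity and the weights $\alpha_t$ (the paper's bound is the \emph{sum}, not the max, of your two terms, but since the hypothesis controls the max the radius accounting still closes), and extract the exponential tail from the algorithmic stability of the iterates --- the paper's appendix proves exactly the stability sum $\sum_{t}\|D_t\|_\infty^2 \leq 288M^2R^2/(\lambda^2(2\gamma+T))$ that your sketch defers. The only substantive deviation is the concentration tool: you apply McDiarmid's bounded-differences inequality to the scalar $\|\overline{g}_{T+1}-g_\lambda\|_{\hilsp_k}$, whereas the paper applies Pinelis's Hilbert-space martingale inequality (Proposition~\ref{prop:martingale}) to $\overline{g}_{T+1}-\expec[\overline{g}_{T+1}]$; both are driven by the identical stability estimates and yield the claimed exponent up to universal constants.
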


\paragraph{Remark} Although we assume {\bf(A1)}, Lipschitz smoothness of $\risk$ is not required in the proof of this theorem. That is, the parameter $L$ does not affect the performance of averaged stochastic gradient descent.

We notice that two convergence rates (\ref{eq:sgd_exp_convergence}) and (\ref{eq:asgd_exp_convergence}) are comparable, but the threshold of averaged stochastic gradient descent for the beginning time of exponential convergence has much better dependence on $\lambda$ than stochastic gradient descent, that is, 
the averaging technique accelerates the convergence in the early phase for small $\lambda$ as shown in \cite{rakhlin2012making,lacoste2012simpler}.
As a result, threshold on $T$ becomes not important.
From the convergence rate (\ref{eq:asgd_exp_convergence}), the required number of iterations to obtain $\epsilon$-accuracy is 
\begin{equation}
 O\left( \frac{1}{m^2(\delta) \lambda^2}\log\left( \frac{1}{\epsilon} \right)\right). \label{eq:asgd_complexity}    
\end{equation} 
We find clearly that this required iterations $T$ naturally exceeds the time threshold for a rather small $\epsilon$ when 
we ignore the second term in the maximum in the threshold which is often inactive.
In addition, this averaging scheme gives the sublinear convergence rate $O(1/T^q)$ ($q\in[0,1]$), even when $T$ is smaller than the threshold,
as explained in the case of Theorem \ref{thm:sgd_exp_convergence}.
Finally, we note that since $m(\delta)\geq \delta$ for the squared loss, a convergence rate (\ref{eq:asgd_exp_convergence}) is exactly the same as that obtained in \cite{pillaud2017exponential} when an additional assumption on the decreasing rate of eigenvalues of the covariance operator is not made.
In other words, we succeed in generalizing the result in \cite{pillaud2017exponential} without degradation under general settings.

As a corollary, we here derive a convergence rate for case of the logistic loss and Gaussian kernel,  
which can be obtained by setting $m(\delta)=\log((1+2\delta)/(1-2\delta))$ and $M,R=1$.
\begin{corollary}
Consider the logistic loss and Gaussian kernel.
Suppose the same assumptions in Theorem \ref{thm:asgd_exp_convergence} hold.
Then, there exists a sufficiently small $\lambda>0$ such that the following convergence rate of $\expec[ \error(g_{T+1}) - \error([\expec[Y|x]) ]$ holds. 
\begin{equation} \label{eq:asgd_exp_convergence_for_logloss_Gaussian}
  2\exp\left( - \frac{\lambda^2(2\gamma+T)}{2^{10}\cdot 9}\log^2\left(\frac{1+2\delta}{1-2\delta}\right) \right).
\end{equation}
\end{corollary}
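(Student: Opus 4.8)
The plan is to treat this corollary as a direct instantiation of Theorem~\ref{thm:asgd_exp_convergence}. The only work is (i) checking that the logistic loss paired with a Gaussian kernel satisfies Assumptions {\bf(A1)}--{\bf(A5)}, and (ii) reading off the explicit constants $M$, $R$, and $m(\delta)$ that enter the exponent of the bound (\ref{eq:asgd_exp_convergence}). Once these are pinned down, substitution yields (\ref{eq:asgd_exp_convergence_for_logloss_Gaussian}) with no further analysis; the existence of a sufficiently small $\lambda > 0$ is inherited verbatim from the theorem.

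First I would verify the hypotheses. For the logistic loss $\phi(v)=\log(1+\exp(-v))$ we have $\phi'(v)=-1/(1+\exp(v))$, so $|\partial_\zeta l(\zeta,y)|=|\phi'(y\zeta)|\leq 1$, giving $M=1$; moreover $\phi''(v)=\exp(v)/(1+\exp(v))^2\leq 1/4$, so $\phi$ and hence $\risk$ are smooth, establishing {\bf(A1)} (the precise value of $L$ is immaterial here, as the Remark following Theorem~\ref{thm:asgd_exp_convergence} notes that $L$ does not affect averaged SGD). For the Gaussian kernel, $k(x,x)=1$ for all $x \in \featuresp$, so {\bf(A2)} holds with $R=1$. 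Assumption {\bf(A3)} is carried over unchanged from the theorem's hypotheses, and {\bf(A4)}, {\bf(A5)} have already been checked for the logistic loss in the discussion following Assumption~\ref{assump:convergence}: the link function $h_*(\mu)=\log(\mu/(1-\mu))$ is differentiable, strictly increasing, invertible on $(0,1)$, satisfies $\sgn(\mu-1/2)=\sgn(h_*(\mu))$, and its associated Bregman divergence is the Kullback--Leibler divergence, which is positive and forces uniqueness of the Bayes rule $g_*$.

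Next I would compute $m(\delta)$ for the logistic loss. From $h_*(\mu)=\log(\mu/(1-\mu))$ we obtain $h_*(1/2+\delta)=\log\frac{1+2\delta}{1-2\delta}$ and $h_*(1/2-\delta)=-\log\frac{1+2\delta}{1-2\delta}$, so both terms in $m(\delta)=\max\{h_*(1/2+\delta),|h_*(1/2-\delta)|\}$ equal $\log\frac{1+2\delta}{1-2\delta}$, whence $m(\delta)=\log\frac{1+2\delta}{1-2\delta}$. Substituting $M=R=1$ and this value of $m(\delta)$ into the exponent $-m^2(\delta)\lambda^2(2\gamma+T)/(2^{10}\cdot 9\, M^2R^4)$ of (\ref{eq:asgd_exp_convergence}) produces exactly the exponent of (\ref{eq:asgd_exp_convergence_for_logloss_Gaussian}), since $m^2(\delta)=\log^2\frac{1+2\delta}{1-2\delta}$ and $M^2R^4=1$.

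There is essentially no analytic obstacle here: the difficulty has already been absorbed into Theorem~\ref{thm:asgd_exp_convergence}. The only point requiring care is bookkeeping---confirming that each of the five assumptions genuinely holds for the logistic loss (rather than merely being plausible) and that $M=1$, $R=1$ are the correct sharp constants, since any looseness there would weaken the stated rate. Given the verifications above, the substitution is immediate and the corollary follows.
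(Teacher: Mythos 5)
Your proposal is correct and matches the paper's approach exactly: the paper obtains the corollary precisely by substituting $m(\delta)=\log\bigl((1+2\delta)/(1-2\delta)\bigr)$ and $M=R=1$ into the bound of Theorem~\ref{thm:asgd_exp_convergence}, and your verification of the assumptions and computation of the constants is accurate.
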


\subsection{Proof Idea}
We here explain the proof idea for convergence theorems.
All missing proofs are found in the Appendix.
The proof is mainly composed of three parts.
We first show the Bayes optimality of $g_\lambda$ for a small $\lambda>0$ and specify the size of its neighborhood in $\hilsp_k$ to ensure the optimality.

\begin{proposition} \label{prop:bayes_region}
  Suppose {\bf(A2)}--{\bf(A5)} in Assumption \ref{assump:convergence} hold.
  Then, there exists $\lambda>0$ such that an arbitrary $g\in \hilsp_k$ satisfying $\|g - g_{\lambda}\|_{\hilsp_k} \leq m(\delta)/2R$ is the Bayes classifier of $\error(g)$. That is,
  $\error(\expec[Y|x]) = \error(g)$.
\end{proposition}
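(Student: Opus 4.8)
The plan is to reduce the claim to a statement about sign agreement between $g$ and the surrogate Bayes rule $g_*$, and then to control the gap $g-g_*$ by routing it through the regularized minimizer $g_\lambda$. The starting observation is that by {\bf(A4)} we have $\sgn(g_*(X)) = \sgn(h_*(\tpr(1|X))) = \sgn(\tpr(1|X)-1/2) = \sgn(\expec[Y|X])$, $\tpr_\featuresp$-almost surely, so $g_*$ induces exactly the Bayes classifier and $\error(g_*) = \error(\expec[Y|x])$. Hence it suffices to exhibit a $\lambda>0$ for which every $g$ with $\|g-g_\lambda\|_{\hilsp_k} \leq m(\delta)/(2R)$ satisfies $\sgn(g(X)) = \sgn(g_*(X))$ almost surely.

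First I would establish the convergence $\|g_\lambda - g_*\|_{\hilsp_k} \to 0$ as $\lambda \to 0^+$. Note that $g_*$ minimizes $\risk$ over all measurable functions and, by {\bf(A5)}, lies in $\hilsp_k$, so it also minimizes $\risk$ over $\hilsp_k$. Comparing $g_\lambda$ with $g_*$ in the definition of $g_\lambda$ gives $\risk(g_\lambda) + \frac{\lambda}{2}\|g_\lambda\|_{\hilsp_k}^2 \leq \risk(g_*) + \frac{\lambda}{2}\|g_*\|_{\hilsp_k}^2$, and since $\risk(g_\lambda) \geq \risk(g_*)$ this yields both $\|g_\lambda\|_{\hilsp_k} \leq \|g_*\|_{\hilsp_k}$ and $\risk(g_\lambda) \to \risk(g_*)$. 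The family $\{g_\lambda\}$ is therefore norm-bounded, so any sequence $\lambda_n \to 0$ admits a weakly convergent subsequence $g_{\lambda_{n_k}} \rightharpoonup \bar{g}$. Because $|\partial_\zeta l| \leq M$ makes $\phi$ $M$-Lipschitz and hence $\risk$ Lipschitz in the $\hilsp_k$-norm (using $|g(x)| \leq R\|g\|_{\hilsp_k}$ from {\bf(A2)}), and $\risk$ is convex, it is weakly lower semicontinuous, so $\risk(\bar{g}) \leq \liminf_k \risk(g_{\lambda_{n_k}}) = \risk(g_*)$; thus $\bar{g}$ minimizes $\risk$ over $\hilsp_k$. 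By the excess-risk identity \eqref{eq:excess_risk} together with positivity of $d_{l_*}$ in {\bf(A5)}, this minimizer is unique, so $\bar{g} = g_*$, and since every weakly convergent subsequence has the same limit, $g_\lambda \rightharpoonup g_*$. Finally $\|g_*\|_{\hilsp_k} \leq \liminf \|g_\lambda\|_{\hilsp_k} \leq \limsup \|g_\lambda\|_{\hilsp_k} \leq \|g_*\|_{\hilsp_k}$ gives $\|g_\lambda\|_{\hilsp_k} \to \|g_*\|_{\hilsp_k}$, and weak convergence with convergence of norms upgrades to strong convergence in $\hilsp_k$.

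Now fix $\lambda>0$ small enough that $\|g_\lambda - g_*\|_{\hilsp_k} \leq m(\delta)/(2R)$. For any $g$ with $\|g-g_\lambda\|_{\hilsp_k} \leq m(\delta)/(2R)$, the triangle inequality gives $\|g-g_*\|_{\hilsp_k} \leq m(\delta)/R$, and the norm relation $\|\cdot\|_{L_\infty} \leq R\|\cdot\|_{\hilsp_k}$ from {\bf(A2)} yields $|g(x)-g_*(x)| \leq m(\delta)$ for every $x$. Combined with the lower bound $|g_*(X)| \geq m(\delta)$ (obtained from the strong low-noise condition {\bf(A3)} and the monotonicity of $h_*$ in {\bf(A4)}), this forces $g(X)$ and $g_*(X)$ to share the same sign $\tpr_\featuresp$-almost surely, whence $\sgn(g(X)) = \sgn(g_*(X)) = \sgn(\expec[Y|X])$ and therefore $\error(g) = \error(\expec[Y|x])$.

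The main obstacle is the second step: upgrading the elementary boundedness and excess-risk convergence of $g_\lambda$ to strong $\hilsp_k$-convergence toward $g_*$. Since $\risk$ need not be strongly convex, no rate is available from convexity alone, and the argument must go through weak compactness, weak lower semicontinuity, and the uniqueness of the Bayes rule furnished by {\bf(A5)}. A minor measure-theoretic point is the set where $g(X)=0$ while $g_*(X)\neq 0$; this is excluded by reading {\bf(A3)} as a strict inequality, which gives $|g_*(X)| > m(\delta)$ almost surely and hence strict sign agreement.
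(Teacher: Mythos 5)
Your proof is correct, and the second half (triangle inequality, the sup-norm bound $\|\cdot\|_{L_\infty}\le R\|\cdot\|_{\hilsp_k}$, and sign agreement via $|g_*(X)|\ge m(\delta)$) coincides with the paper's. Where you genuinely diverge is in the key lemma that $g_\lambda\to g_*$ in $\hilsp_k$ as $\lambda\to 0$. The paper argues along a decreasing sequence $\lambda_i\downarrow 0$: it shows $\|g_{\lambda_i}\|_{\hilsp_k}$ is nondecreasing and bounded by $\|g_*\|_{\hilsp_k}$, then uses the $\lambda_i$-strong convexity of $\risk_{\lambda_i}$ to get the quantitative bound $\|g_{\lambda_j}-g_{\lambda_i}\|_{\hilsp_k}^2\le \|g_{\lambda_j}\|_{\hilsp_k}^2-\|g_{\lambda_i}\|_{\hilsp_k}^2$, so the regularization path is Cauchy; the limit is identified with $g_*$ via continuity of $\risk$ and uniqueness from {\bf(A5)}. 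You instead run the direct method: norm-boundedness gives a weakly convergent subsequence, convexity plus continuity gives weak lower semicontinuity of $\risk$, uniqueness identifies the weak limit as $g_*$, and weak convergence together with $\|g_\lambda\|_{\hilsp_k}\to\|g_*\|_{\hilsp_k}$ (squeezed between weak lower semicontinuity of the norm and the bound $\|g_\lambda\|_{\hilsp_k}\le\|g_*\|_{\hilsp_k}$) upgrades to strong convergence by the Radon--Riesz property. The paper's route is more elementary and yields an explicit modulus of Cauchyness along the path, which is aesthetically closer to a quantitative statement even though neither proof extracts a rate; your route is shorter, does not exploit the quadratic form of the regularizer beyond the norm bound, and would survive replacing $\frac{\lambda}{2}\|\cdot\|^2$ by a non-strongly-convex penalty, at the cost of invoking reflexivity and weak compactness. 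Two small remarks: both proofs implicitly use a piece of {\bf(A1)} (you for Lipschitz continuity of $\risk$, the paper for convexity and continuity) even though the proposition only lists {\bf(A2)}--{\bf(A5)}, so you are no worse off than the paper there; and your closing observation that the strict inequality in {\bf(A3)} rules out the boundary case $g(X)=0$ is a point the paper's own proof glosses over.
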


\paragraph{Remark} This proposition shows the existence of $\lambda$ to provide the Bayes classifier. 
In the expected risk minimization problem, such an appropriate value of $\lambda$ represents the inherent difficulty of the problem and that is controlled by the choice of kernel function $k:\featuresp\times \featuresp \rightarrow \realsp$.
For the infinite dimensional problems, specifying the value of $\lambda$ is somewhat difficult beforehand, indeed, it was not provided even for the squared loss \citep{pillaud2017exponential}.
However, we can specify $\lambda$ for finite dimensional problems as follows.
We assume that there exist positive values $\Delta,\ v_\Delta$ such that $\risk(g) \geq \risk(g_*) + v_\Delta$ for arbitrary $g$ satisfying $\|g-g_*\|_{\hilsp_k} \geq \Delta$. 
This condition can be derived from the local strong convexity at $g_*$ which is often assumed for the logistic loss (c.f. \cite{bach2013non}).
Then, we can easily show that the minimizer $g_\lambda$ satisfies 
$\|g_\lambda - g_*\|_{\hilsp_k}< \Delta$ when $\|g_*\|_{\hilsp_k} \leq 2v_\Delta/\lambda$.
Therefore, $\lambda$ should be chosen to satisfy the condition $\|g_*\|_{\hilsp_k} \leq 2v_\Delta/\lambda$ for a target accuracy $\Delta=m(\delta)/2R$ as seen in the proof.
In short, an appropriate $\lambda$ depends on $\|g_*\|_{\hilsp_k}$ and the local strong convexity $v_\Delta$ at $g_*$.
As for the kernel $k$, it should be chosen for making them better conditioned under a bounded constraint $k(x,x)\leq R$.
As a result, the required sample size (number of iterations) depends on $\|g_*\|_{\hilsp_k}$ and the convexity $v_\Delta$ via the value of $\lambda$ (cf. Theorem \ref{thm:sgd_exp_convergence} and \ref{thm:asgd_exp_convergence}), but 
such a dependence is quite natural as seen in the theory of kernel ridge regression \citep{caponnetto2007optimal}.

We notice that from Proposition \ref{prop:bayes_region}, the goal of classification problems is achieved by finding a function included in the neighborhood of $g_\lambda$ providing the Bayes rule for $\error$,
of which the existence is shown in the proposition.
Since $g_\lambda$ is the minimizer of $\risk_\lambda$ in $\hilsp_k$, it is expected that a sequence of iterates
obtained by a stochastic optimization method such as stochastic gradient descent to solve the problem converges to $g_\lambda$ with high probability.
To derive the probability and convergence rate to obtain the Bayes rule, we verify the convergence of an expected estimator and its variance.
For the variance, we utilize the following proposition to bound it.

\begin{proposition}[\cite{pinelis1994optimum}] \label{prop:martingale}
Let $D_1,\ldots,D_T$ be a martingale difference sequence taking values in $\hilsp_k$.
We assume that there exists a constant $c_T > 0$ such that $\sum_{t=1}^T\|D_t\|_\infty^2 \leq c_T^2$, where $\|D_t\|_\infty$ is the essential supremum of $\|D_t\|_{\hilsp_k}$.
Then, for $\forall \epsilon>0$, 
\begin{equation*}
  \prob \left[\sup_{T\geq \forall s \geq 1} \left\| \sum_{t=1}^s D_t \right\|_{\hilsp_k} \geq \epsilon \right] \leq 2\exp\left( -\frac{\epsilon^2}{2c_T^2}\right).
\end{equation*}
\end{proposition}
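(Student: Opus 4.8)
The plan is to prove this via an exponential supermartingale argument tailored to the Hilbert geometry of $\hilsp_k$, combined with a maximal inequality. Write $S_s=\sum_{t=1}^s D_t$ with $S_0=0$ and let $(\mathcal{F}_s)$ be the natural filtration. Since the norm $\|\cdot\|_{\hilsp_k}$ is even but not differentiable at the origin, I would not work with $\exp(\lambda\|S_s\|_{\hilsp_k})$ directly; instead I use the even, smooth majorant $\cosh(\lambda\|S_s\|_{\hilsp_k})$ and aim to show that
\[ Y_s := \cosh\!\left(\lambda\|S_s\|_{\hilsp_k}\right)\exp\!\left(-\tfrac{\lambda^2}{2}\sum_{t=1}^s\|D_t\|_\infty^2\right) \]
is a nonnegative supermartingale for each fixed $\lambda>0$.

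The crux, and the \emph{main obstacle}, is the one-step conditional estimate
\[ \expec\!\left[\cosh(\lambda\|S_s\|_{\hilsp_k})\mid\mathcal{F}_{s-1}\right]\le \cosh(\lambda\|S_{s-1}\|_{\hilsp_k})\,\cosh(\lambda\|D_s\|_\infty), \]
where the Hilbert structure enters decisively. Setting $u=S_{s-1}$ (which is $\mathcal{F}_{s-1}$-measurable), $a=\|u\|_{\hilsp_k}$, and $b=\|D_s\|_\infty$, I would expand $\|u+D_s\|_{\hilsp_k}^2 = a^2 + 2\langle u,D_s\rangle_{\hilsp_k} + \|D_s\|_{\hilsp_k}^2 \le a^2 + b^2 + 2r$, where $r:=\langle u,D_s\rangle_{\hilsp_k}$ satisfies $|r|\le ab$ and, by the martingale-difference property, $\expec[r\mid\mathcal{F}_{s-1}]=0$. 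Since $\cosh(\lambda\sqrt{\cdot})$ is increasing, it suffices to bound $\expec[\cosh(\lambda\sqrt{a^2+b^2+2r})\mid\mathcal{F}_{s-1}]$. I would then check that $r\mapsto\cosh(\lambda\sqrt{a^2+b^2+2r})$ is convex (its second derivative is a positive multiple of $\lambda s\cosh(\lambda s)-\sinh(\lambda s)\ge 0$, with $s=\sqrt{a^2+b^2+2r}$), apply the Hoeffding chord bound for a mean-zero variable supported in $[-ab,ab]$, and collapse the two endpoint terms through the identity $\tfrac12[\cosh(\lambda(a+b))+\cosh(\lambda(a-b))]=\cosh(\lambda a)\cosh(\lambda b)$. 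This yields the claimed conditional bound; the degenerate case $a=0$ is immediate since then $\cosh(\lambda\|u\|_{\hilsp_k})=1$. Finally $\cosh(\lambda b)\le\exp(\lambda^2 b^2/2)$ converts the product into the exponential variance factor, giving $\expec[Y_s\mid\mathcal{F}_{s-1}]\le Y_{s-1}$.

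With the supermartingale in hand, I would invoke the maximal inequality for nonnegative supermartingales, $\prob[\sup_{0\le s\le T}Y_s\ge c]\le \expec[Y_0]/c = 1/c$, since $Y_0=\cosh(0)=1$. On the event $\{\sup_{1\le s\le T}\|S_s\|_{\hilsp_k}\ge\epsilon\}$, using $\sum_{t\le s}\|D_t\|_\infty^2\le c_T^2$ together with monotonicity of $\cosh$, one has $\sup_s Y_s\ge \cosh(\lambda\epsilon)\exp(-\tfrac{\lambda^2}{2}c_T^2)$, so the probability in question is at most $\exp(\tfrac{\lambda^2}{2}c_T^2)/\cosh(\lambda\epsilon)$. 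Bounding $\cosh(\lambda\epsilon)\ge\tfrac12\exp(\lambda\epsilon)$ (the source of the prefactor $2$) and optimizing the free parameter at $\lambda=\epsilon/c_T^2$ produces the exponent $-\epsilon^2/(2c_T^2)$ and the stated bound $2\exp(-\epsilon^2/(2c_T^2))$. The only genuinely Hilbert-specific ingredient is the conditional $\cosh$ inequality above; everything else is a generic supermartingale tail argument, which is why this step is where I would concentrate the effort.
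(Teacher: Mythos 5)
Your proof is correct, but it is worth noting that the paper does not prove this proposition at all: it is imported verbatim from Pinelis (1994), where it is a special case of Theorem~3.5 for $(2,D)$-smooth Banach spaces with $D=1$ in the Hilbert case. What you have written is, in effect, a self-contained Hilbert-space specialization of Pinelis's own argument, and every step checks out: the one-step bound $\expec[\cosh(\lambda\|S_s\|_{\hilsp_k})\mid\mathcal{F}_{s-1}]\le\cosh(\lambda\|S_{s-1}\|_{\hilsp_k})\cosh(\lambda\|D_s\|_\infty)$ is exactly the place where the general proof invokes $2$-smoothness, and in a Hilbert space it reduces to the inner-product expansion you use, with $r=\langle S_{s-1},D_s\rangle_{\hilsp_k}$ conditionally mean-zero and supported in $[-ab,ab]$; the convexity of $r\mapsto\cosh(\lambda\sqrt{a^2+b^2+2r})$ is right (your second-derivative expression $\lambda s\cosh(\lambda s)-\sinh(\lambda s)\ge0$ follows from $\tanh x\le x$, and $a^2+b^2+2r\ge(a-b)^2\ge0$ keeps the square root well defined over the whole chord); the product identity for $\cosh$, the bound $\cosh x\le e^{x^2/2}$, Ville's maximal inequality with $Y_0=1$, the lower bound $\cosh(\lambda\epsilon)\ge\tfrac12 e^{\lambda\epsilon}$ producing the prefactor $2$, and the choice $\lambda=\epsilon/c_T^2$ all assemble correctly into the stated tail bound. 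The trade-off is the expected one: the citation buys generality (separable $2$-smooth Banach spaces) and brevity, while your argument buys a transparent, elementary, and fully self-contained proof in the only setting the paper actually needs.
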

Let $\hat{g}_{T+1}$ stand for an output iterate of stochastic gradient descent or averaged stochastic gradient descent with $T$-iterations.
Let $Z_1,\ldots, Z_T$ be i.i.d. random variables following $\tpr$.
Since $D_t = \expec[\hat{g}_{T+1} | Z_1,\ldots,Z_t ] - \expec[\hat{g}_{T+1} | Z_1,\ldots,Z_{t-1} ]$ ($t\in \{1,\ldots,T\}$) is a martingale difference sequence,
Proposition \ref{prop:martingale} can be applied to bound the norm of the sum of $D_t$ over $t \in \{1,\ldots,T\}$.
Since $\sum_{t=1}^TD_t = \hat{g}_{T+1} - \expec[\hat{g}_{T+1}]$, we see that for $\forall \delta>0$,
with the probability at least $1-2\exp( - m^2(\delta)/32R^2c_T^2 )$,
\begin{equation} \label{eq:concentration}
  \left\| \hat{g}_{T+1} - \expec[\hat{g}_{T+1}] \right\|_{\hilsp_k} < \frac{m(\delta)}{4R}.
\end{equation}

Thus, by combining Proposition \ref{prop:bayes_region} and the inequality (\ref{eq:concentration}),
we conclude that if an expected function $\expec[\hat{g}_{T+1}]$ is in the neighborhood of the radius $\delta/4L'R$ around $g_\lambda$,
then $\hat{g}_{T+1}$ is the Bayes optimal for $\error$ with the probability at least $1-2\exp( - m^2(\delta)/32R^2c_T^2 )$.
That is, 
\[ \error(\hat{g}_{T+1}) = \error(\expec[Y|x]). \]
In other words, from the definition of the expected classification error $\error$, if $\left\| \expec[\hat{g}_{T+1}] - g_\lambda \right\|_{\hilsp_k} < m(\delta)/4R$, then 
\begin{equation} \label{eq:exp_convergence}
  \expec[ \error(\hat{g}_{T+1}) - \error(\expec[Y|x])] \leq 2\exp \left( - \frac{m^2(\delta)}{32R^2c_T^2}  \right).  
\end{equation}
Therefore, by confirming the convergence of $\expec[\hat{g}_{T+1}]$ to $g_\lambda$ and specifying the convergence rate $O(1/T^q)$ ($q>0$) of $c_T$ to zero,
we can conclude the exponential convergence of the expected classification error from the inequality (\ref{eq:exp_convergence}).
Thus, the remaining problem is to verify these convergences for Algorithm \ref{alg:sgd}.
As for the expected iterate $\expec[ \hat{g}_{T+1}]$, its convergence can be shown by naturally extending proofs \citep{bottou2018optimization,lacoste2012simpler}
for stochastic gradient descent in Euclidean space.
For $c_T$, we can show its convergence by utilizing an argument \citep{hardt2016train} to show the stability of stochastic gradient descent for strongly convex problems.
Such a combination of the martingale bound and the stability analysis of stochastic gradient descent has also been adopted in \cite{liu2017algorithmic} 
for another purpose.

\paragraph{Auxiliary results for main theorems}
We now exhibit auxiliary results for deriving the exponential convergence rate of stochastic gradient descent (Algorithm \ref{alg:sgd} without averaging).
To do so, we here present convergence rates of quantities $\expec[g_{T+1}]$ and $c_T$. 
The rate of the former is given in the following proposition.
\begin{proposition} \label{prop:convergence_sgd}
Suppose Assumption {\bf(A1)} holds.
Consider Algorithm \ref{alg:sgd} without averaging and with the same learning rates as in Theorem \ref{thm:sgd_exp_convergence}.
We assume that $\eta_1 \leq 1/(L+\lambda)$ and $\sigma^2>0$ is an upper-bound on the variance of stochastic gradient $\partial_\zeta l(g(X),Y)k(X,\cdot)$.
We set
\[ \nu \defeq \max\left\{ \frac{2}{\lambda^2}(L+\lambda)\sigma^2, (1+\gamma)(\risk_\lambda(g_1)- \risk_\lambda(g_\lambda))\right\}. \]
Then, we have
\[ \| \expec[g_{T}] - g_\lambda \|_{\hilsp_k}^2 \leq \frac{2\nu}{\lambda(\gamma + T)}. \]
\end{proposition}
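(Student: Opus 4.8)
The plan is to reduce the claim to a scalar recursion on the expected regularized-risk gap and solve it by induction. Observe first that, under \textbf{(A1)}, the objective $\risk_\lambda$ is $\lambda$-strongly convex (from the quadratic term) and $(L+\lambda)$-smooth (the $L$-smoothness of $\risk$ plus the $\lambda$-smoothness of $\frac{\lambda}{2}\|\cdot\|_{\hilsp_k}^2$). Moreover, conditionally on the history $\mathcal{F}_{t-1}$, the stochastic gradient $G_\lambda(g_t,z_t)=\partial_\zeta l(g_t(x_t),y_t)k(x_t,\cdot)+\lambda g_t$ is an unbiased estimate of $\nabla\risk_\lambda(g_t)$, and its variance coincides with that of $\partial_\zeta l(g_t(X),Y)k(X,\cdot)$ because $\lambda g_t$ is deterministic given $\mathcal{F}_{t-1}$; hence $\expec[\|G_\lambda(g_t,z_t)\|_{\hilsp_k}^2\mid\mathcal{F}_{t-1}]\leq\|\nabla\risk_\lambda(g_t)\|_{\hilsp_k}^2+\sigma^2$. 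Since $\|\expec[g_T]-g_\lambda\|_{\hilsp_k}^2\leq\expec[\|g_T-g_\lambda\|_{\hilsp_k}^2]$ by Jensen's inequality and $\frac{\lambda}{2}\expec[\|g_T-g_\lambda\|_{\hilsp_k}^2]\leq\expec[\risk_\lambda(g_T)]-\risk_\lambda(g_\lambda)$ by strong convexity, it suffices to prove $\expec[\risk_\lambda(g_T)]-\risk_\lambda(g_\lambda)\leq\nu/(\gamma+T)$.

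For the recursion I would apply the descent lemma for $(L+\lambda)$-smooth functions to the update $g_{t+1}=g_t-\eta_t G_\lambda(g_t,z_t)$, take the conditional expectation, and insert the variance bound above, which yields
\[
\expec[\risk_\lambda(g_{t+1})\mid\mathcal{F}_{t-1}]\leq\risk_\lambda(g_t)-\eta_t\Bigl(1-\tfrac{L+\lambda}{2}\eta_t\Bigr)\|\nabla\risk_\lambda(g_t)\|_{\hilsp_k}^2+\tfrac{L+\lambda}{2}\eta_t^2\sigma^2.
\]
The step-size hypothesis $\eta_t\leq\eta_1\leq 1/(L+\lambda)$ guarantees $1-\frac{L+\lambda}{2}\eta_t\geq\frac12$, so the middle term is at most $-\frac{\eta_t}{2}\|\nabla\risk_\lambda(g_t)\|_{\hilsp_k}^2$; applying the Polyak--\L{}ojasiewicz inequality $\|\nabla\risk_\lambda(g_t)\|_{\hilsp_k}^2\geq 2\lambda(\risk_\lambda(g_t)-\risk_\lambda(g_\lambda))$ implied by strong convexity, and taking total expectations, gives with $b_t\defeq\expec[\risk_\lambda(g_t)]-\risk_\lambda(g_\lambda)$,
\[
b_{t+1}\leq(1-\lambda\eta_t)\,b_t+\tfrac{L+\lambda}{2}\eta_t^2\sigma^2.
\]

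Substituting $\eta_t=2/(\lambda(\gamma+t))$ turns this into $b_{t+1}\leq\frac{\gamma+t-2}{\gamma+t}b_t+\frac{2(L+\lambda)\sigma^2}{\lambda^2(\gamma+t)^2}$, and I would then establish $b_t\leq\nu/(\gamma+t)$ by induction on $t$. The base case $t=1$ is exactly the second entry of $\nu$, since $b_1=\risk_\lambda(g_1)-\risk_\lambda(g_\lambda)$ is deterministic and $\nu\geq(1+\gamma)b_1$. For the inductive step, bounding the noise term by $\nu/(\gamma+t)^2$ via the first entry $\frac{2}{\lambda^2}(L+\lambda)\sigma^2\leq\nu$ yields $b_{t+1}\leq\frac{(\gamma+t-1)\nu}{(\gamma+t)^2}\leq\frac{\nu}{\gamma+t+1}$, the last inequality holding because $(\gamma+t-1)(\gamma+t+1)=(\gamma+t)^2-1\leq(\gamma+t)^2$. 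Combining this with the strong-convexity and Jensen reductions of the first paragraph delivers $\|\expec[g_T]-g_\lambda\|_{\hilsp_k}^2\leq\frac{2}{\lambda}b_T\leq\frac{2\nu}{\lambda(\gamma+T)}$.

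I expect the main conceptual step to be the decision to track the expected function-value gap $b_t$ rather than the squared distance directly: this choice is precisely what makes the smoothness constant $L+\lambda$ and the initial gap $\risk_\lambda(g_1)-\risk_\lambda(g_\lambda)$ enter $\nu$ in the stated form, and it lets the condition $\eta_t\leq 1/(L+\lambda)$ discard the gradient-norm term cleanly. A distance-based recursion would also close but produce a different constant, which would then fail to match the $\nu$ reused in Theorem \ref{thm:sgd_exp_convergence}. The remaining work is routine smoothness/variance bookkeeping together with the elementary inequality closing the induction.
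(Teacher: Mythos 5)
Your proposal is correct and follows essentially the same route as the paper's proof: the $(L+\lambda)$-smooth descent lemma combined with the strong-convexity (PL) inequality to obtain the recursion $b_{t+1}\leq(1-\lambda\eta_t)b_t+\tfrac{L+\lambda}{2}\eta_t^2\sigma^2$, the identical induction giving $b_t\leq\nu/(\gamma+t)$, and a final strong-convexity-plus-Jensen step. The only (immaterial) difference is that you apply Jensen to the squared norm and strong convexity at $g_T$, whereas the paper applies strong convexity at $\expec[g_T]$ and Jensen to $\risk_\lambda$; both yield the same bound $\tfrac{2}{\lambda}b_T$.
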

This convergence can be shown in a standard way in the stochastic optimization literature.

On the other hand, a bound for the value of $c_T$ can be derived in the following manner.
Let $Z_t' \sim \tpr$ be a random variable independent from $Z_1,\ldots,Z_T$ and let $g_{T+1}^t$ be an output of stochastic gradient descent (Algorithm 1 without averaging)
depending on $(Z_1,\ldots,Z_{t-1},Z_t',Z_{t+1},\ldots,Z_T)$.
By setting $D_t = \expec[g_{T+1} | Z_1,\ldots,Z_t ] - \expec[g_{T+1} | Z_1,\ldots,Z_{t-1} ]$, we find
\begin{equation*}
  \|D_t\|_{\infty} \leq \expec[ \|g_{T+1} - g_{T+1}^t\|_{\infty} | Z_1,\ldots,Z_t ], 
\end{equation*}
where we recall that $\|\cdot\|_\infty$ is the essential supremum of $\|\cdot\|_{\hilsp_k}$.
Therefore, $c_T$ can be estimated by bounding $\|g_{T+1} - g_{T+1}^t\|_{\infty}$ uniformly with respect to random variables.
Such a bound would be obtained by the stability property \citep{hardt2016train}, that is, the small deviation that results from replacing one example for stochastic gradient descent.
As a result, this argument leads to the following proposition:

\begin{proposition} \label{prop:martingale_bound_sgd}
Suppose Assumptions {\bf(A1)} and {\bf(A2)} hold.
Consider Algorithm \ref{alg:sgd} without averaging and with the same learning rates as in Theorem \ref{thm:sgd_exp_convergence}.
We assume that $\eta_1 \leq \min\{ 1/L, 1/2\lambda\}$ and $\|g_1\|_{\hilsp_k} \leq \left(2\eta_1+\frac{1}{\lambda}\right)MR$.
Then, it follows that
\[ \sum_{t=1}^T \|D_t\|_{\infty}^2 \leq \frac{144M^2R^2}{\lambda^2(\gamma + T)},\]
where $D_t$ is a martingale difference, as defined previously.
\end{proposition}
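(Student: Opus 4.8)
The plan is to bound each martingale increment by an algorithmic stability quantity and then exploit the contraction of the regularized update. Using the pointwise inequality already stated, $\|D_t\|_{\infty} \leq \expec[\|g_{T+1}-g_{T+1}^t\|_{\infty}\mid Z_1,\ldots,Z_t]$, it suffices to bound, uniformly over the draws, the coupled difference $\Delta_s \defeq \|g_s - g_s^t\|_{\hilsp_k}$ evaluated at $s=T+1$, where $(g_s)$ and $(g_s^t)$ are the two runs of Algorithm \ref{alg:sgd} (without averaging) driven by the same samples except at step $t$, which uses $Z_t=(x_t,y_t)$ and the independent copy $Z_t'=(x_t',y_t')$, respectively. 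Since the two runs share all randomness up to time $t$, we have $g_s=g_s^t$ and hence $\Delta_s=0$ for every $s\leq t$, so the entire deviation is created at step $t$ and then propagated forward.

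First I would control the one-step jump at $s=t$. Because $g_t=g_t^t$, the two updates differ only through the loss gradient on the differing sample, $g_{t+1}-g_{t+1}^t = -\eta_t\bigl(\partial_\zeta l(g_t(x_t),y_t)k(x_t,\cdot) - \partial_\zeta l(g_t(x_t'),y_t')k(x_t',\cdot)\bigr)$. Applying $|\partial_\zeta l|\leq M$ from {\bf(A1)} and $\|k(x,\cdot)\|_{\hilsp_k}=\sqrt{k(x,x)}\leq R$ from {\bf(A2)} gives the uniform bound $\Delta_{t+1}\leq 2\eta_t M R$.

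Next, for each later step $s>t$ the two runs see the same sample $z_s$, so $g_{s+1}$ and $g_{s+1}^t$ are the images of $g_s$ and $g_s^t$ under one common gradient step of the regularized per-sample objective $g\mapsto l(g(x_s),y_s)+\frac{\lambda}{2}\|g\|_{\hilsp_k}^2$. This objective is $\lambda$-strongly convex through the regularizer and, since $l(\cdot,y)$ is convex and smooth, the corresponding gradient-descent map is $(1-\eta_s\lambda)$-contractive for the step sizes at hand; I would invoke the contraction (expansiveness) lemma of \cite{hardt2016train} for strongly convex smooth functions to conclude $\Delta_{s+1}\leq(1-\eta_s\lambda)\Delta_s$. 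The condition $\eta_1\leq\min\{1/L,\,1/2\lambda\}$ is exactly what secures this step: the bound $\eta_s\leq 1/L$ (together with $\lambda$ small, as assumed) places the step size in the contractive regime $\eta_s\leq 2/(\beta+\lambda)$, while $\eta_s\lambda\leq 1/2$ keeps the factor $1-\eta_s\lambda\geq 1/2$ nonnegative so that the telescoping below is legitimate. This contraction is the main obstacle, and the delicate point is that the genuine content lies in the non-expansiveness of the loss part of the step, i.e.\ in controlling $\partial_\zeta l(g_s(x_s),y_s)-\partial_\zeta l(g_s^t(x_s),y_s)$ via smoothness, rather than only in the trivial strong-convexity factor from the regularizer; one must also make sure the smoothness constant used here is a (per-sample) bound compatible with the $L$ appearing in {\bf(A1)}.

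Finally I would unwind the recursion. With $\eta_s=2/(\lambda(\gamma+s))$ one has $1-\eta_s\lambda=(\gamma+s-2)/(\gamma+s)$, so the product $\prod_{s=t+1}^{T}(1-\eta_s\lambda)$ telescopes to $(\gamma+t-1)(\gamma+t)/\bigl((\gamma+T-1)(\gamma+T)\bigr)$. Combined with $\Delta_{t+1}\leq 4MR/(\lambda(\gamma+t))$ this yields the deterministic bound $\Delta_{T+1}\leq 4MR(\gamma+t-1)/\bigl(\lambda(\gamma+T-1)(\gamma+T)\bigr)$, and therefore $\|D_t\|_{\infty}$ is bounded by the same quantity. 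Squaring and summing gives $\sum_{t=1}^{T}\|D_t\|_{\infty}^2 \leq \frac{16M^2R^2}{\lambda^2(\gamma+T-1)^2(\gamma+T)^2}\sum_{t=1}^{T}(\gamma+t-1)^2$, and the elementary estimate $\sum_{t=1}^{T}(\gamma+t-1)^2\leq \frac{1}{3}(\gamma+T)^3$ together with $(\gamma+T)/(\gamma+T-1)\leq 2$ collapses this to $\frac{144M^2R^2}{\lambda^2(\gamma+T)}$, the stated bound (the constant is loose). The only arithmetic care required is that the telescoping product stays nonnegative, which is guaranteed by $\eta_1\leq 1/2\lambda$.
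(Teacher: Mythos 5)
Your argument is correct and follows essentially the same route as the paper's proof: a coupling/stability bound on $\|g_{T+1}-g_{T+1}^t\|_{\hilsp_k}$ consisting of a one-step jump at time $t$ followed by the $(1-\eta_s\lambda)$-contraction of the regularized update (which the paper proves directly via co-coercivity of the per-sample gradient rather than citing \cite{hardt2016train}), then telescoping the product $\prod_{s>t}(\gamma+s-2)/(\gamma+s)$ and summing the squared increments. The only differences are cosmetic: your one-step bound $2\eta_t MR$ is slightly tighter than the paper's $6\eta_t MR$ (you exploit that the $\lambda g_t$ terms cancel since $g_t=g_t^t$), and you keep the $t$-dependence in the summation where the paper first uniformizes each $\|D_t\|_\infty$ to $12MR/(\lambda(\gamma+T))$; both yield the stated constant.
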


By combining these two propositions in the way explained earlier,
we can prove the exponential convergence (Theorem \ref{thm:sgd_exp_convergence}) of the expected classification error for stochastic gradient descent.

We can also show Theorem \ref{thm:asgd_exp_convergence}, i.e., the exponential convergence of averaged stochastic gradient descent
by specifying the rate of $\expec[\overline{g}_{T+1}]$ to $g_\lambda$ and $c_T$ to zero for the algorithm.
Although the averaging method brings more preferable results as seen in Theorem \ref{thm:asgd_exp_convergence},
we defer auxiliary results for this theorem to the Appendix for simplicity.

\paragraph{Comparison with another concentration inequality}
We emphasize that our proof technique can provide a much faster convergence rate than 
that derived from another concentration inequality \citep{kakade2009generalization}.
Indeed, the following convergence rate of the objective gap was shown in \cite{kakade2009generalization} with probability at least $1-\log(T)q$,
\[ O\left( \frac{\log T}{\sqrt{T}}\right) + \frac{\sqrt{\log T}}{T} \sqrt{\log \left(\frac{1}{q}\right)} + \frac{\log(1/q)}{T}. \]
Therefore, $q$ should be $\exp(-o(T))$ to guarantee the convergence.
As a result, a convergence rate of expected classification error is $O(\log(T) \exp(-o(T)))$ which is much slower than our rates 
and a threshold on $T$ also has a worse trade-off with respect to the choice of $q$.

\section{Experiments} \label{sec:experiments}
In this section, we conduct numerical experiments on synthetic datasets to verify our theoretical analyses. 
The random Fourier feature \citep{rahimi2007random} is the most popular and widely used technique to approximate shift invariant kernels $k$ with
the dot-product: $\iota(x)\top\iota(x')$ ($\forall x, \forall x' \in \featuresp)$) through a non-linear embedding $\iota$
from $\featuresp$ to a low-dimensional Euclidean space $\realsp^D$.
When we use such a kernel defined by $\iota$, stochastic gradient descent and its averaging variant are reduced to those for linear models in an Euclidean space.
Moreover, since we assume that the Bayes rule $g_*$ is in $\hilsp_k$, it is reasonable for the numerical verification to consider linear models
and linear separable datasets in an Euclidean space in experiments.

\begin{figure}[ht]
\begin{center}
  \includegraphics[angle=0,width=50mm]{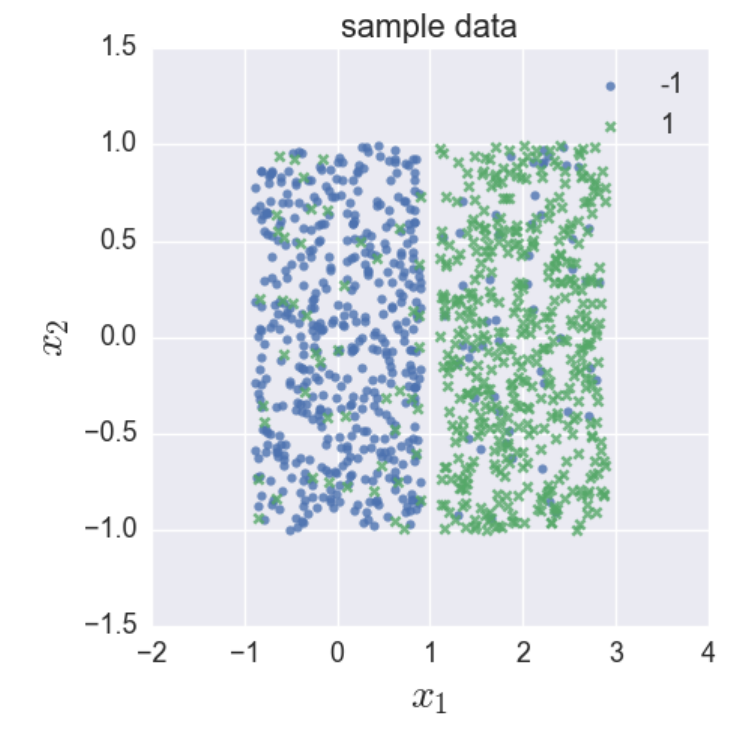}
\vspace{-5mm}
\caption{A subsample of the data used in the experiment with $\delta=0.4$.}  \label{fig:sample_data}
\end{center}
\vspace{-3mm}
\end{figure}

We here explain the experimental setting.
For the loss function, we use logistic loss.
For datasets, we use linear separable two dimensional synthetic datasets as shown in Figure \ref{fig:sample_data} which is subsampled from a dataset.
The support of datasets is composed of two part: $[-1+r,1-r]\times[-1,1]$  and $[1+r,3-r]\times[-1,1]$ in $\realsp^2$, where $r=0.1$.
We consider fixed conditional probabilities on each component, namely, for $\delta \in (0,0.5)$, we use $\tpr(Y=1|x)=0.5-\delta$ for the left part
and we use $\tpr(Y=1|x)=0.5+\delta$ for the right part of the support.
As for the low-noise parameter $\delta$, we test values from $\{0.1,0.25,0.4\}$ and as for the regularization parameter $\lambda$, we test values from $\{0.1,0.01,0.001,0.0001\}$.
Test datasets containing $100,000$ points are sampled from this distribution for each $\delta$.
We run stochastic gradient descent and averaged stochastic gradient descent $5$-times with $20,000$-iterations and we report the best results, on training accuracies,
with respect to $\lambda$ for each $\delta$.
Before running these methods, we additionally use $1000$-iterations for tuning hyperparameter $\gamma$ which is an offset for time index as the optimization proceeds well.
As for the regularization parameter $\lambda$, the value of $0.01$ is chosen for $\delta=0.1,0.25$ and the value of $0.0001$ is chosen for $\delta=0.4$.

\begin{figure*}[th]
  \begin{center}
   {\small
     \begin{tabular}{ccc}
  \hspace{-1mm} $\delta=0.4$ & \hspace{-5mm} $\delta=0.25$ & \hspace{-5mm} $\delta=0.1$ \\ 
  \hspace{-1mm} \includegraphics[angle=0,width=45mm]{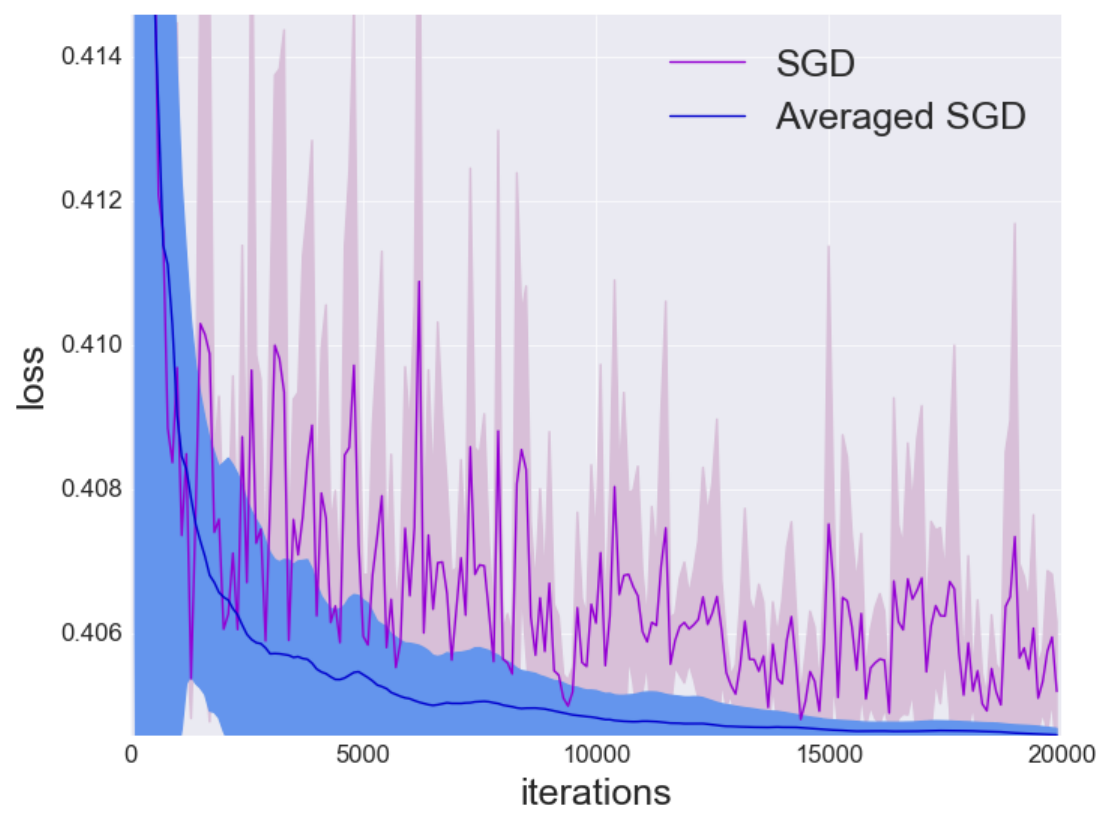}&
  \hspace{-5mm} \includegraphics[angle=0,width=45mm]{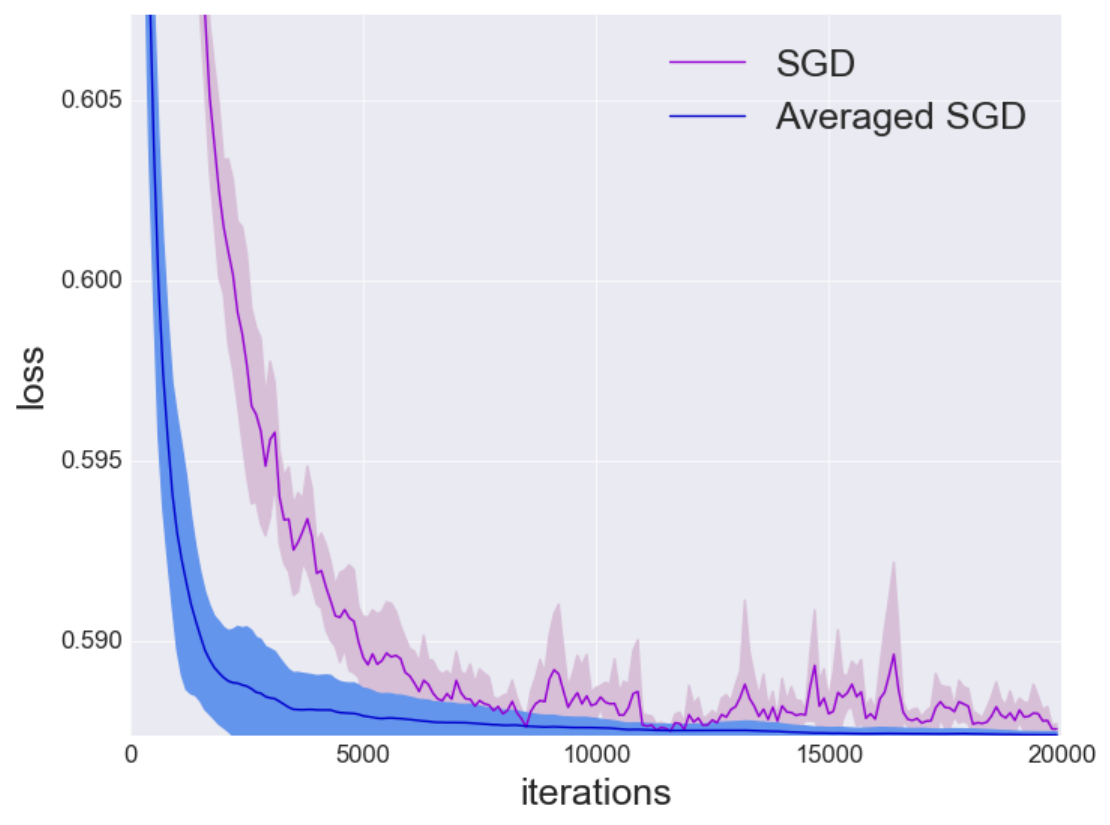}&
  \hspace{-5mm} \includegraphics[angle=0,width=45mm]{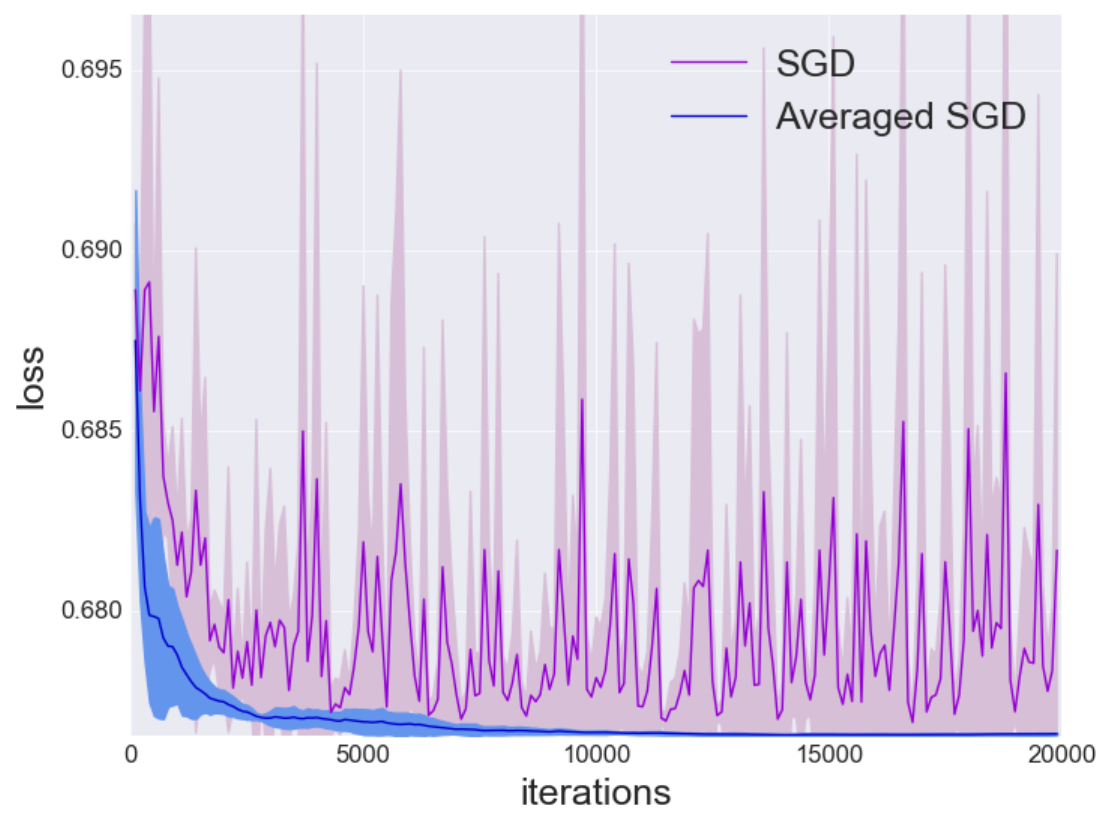}\\
  \hspace{-1mm} \includegraphics[angle=0,width=45mm]{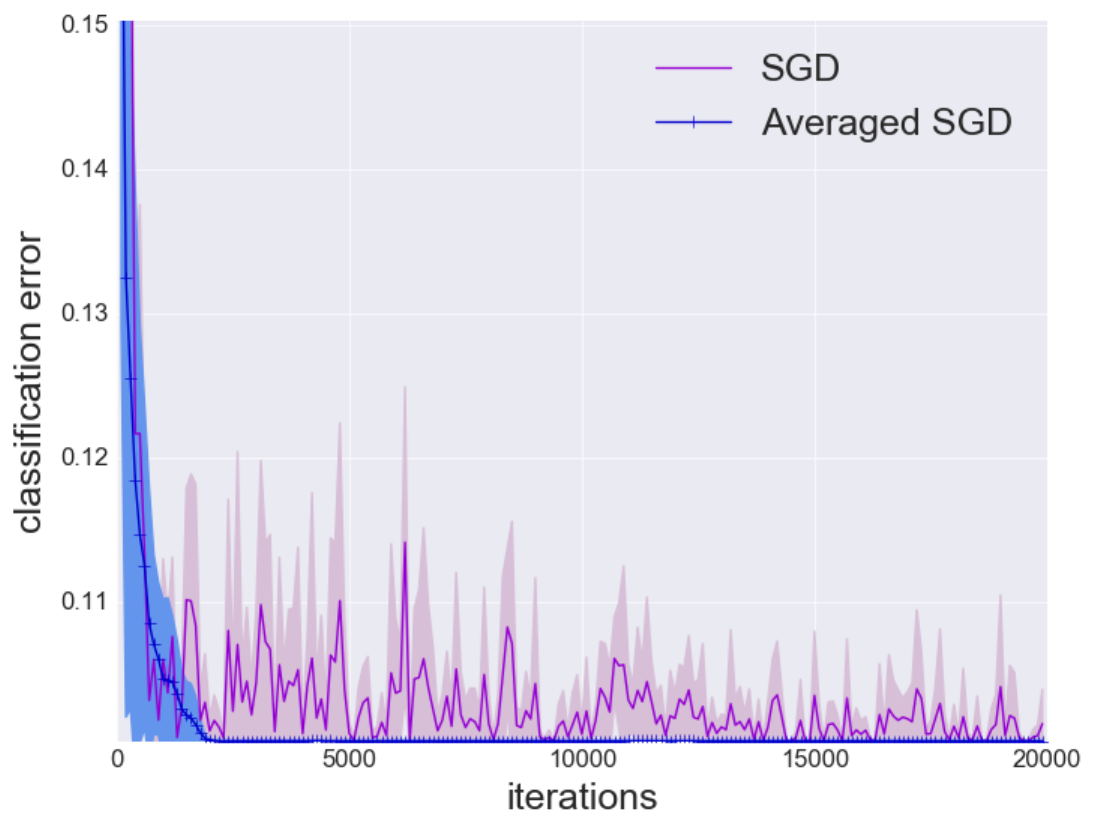}&
  \hspace{-5mm} \includegraphics[angle=0,width=45mm]{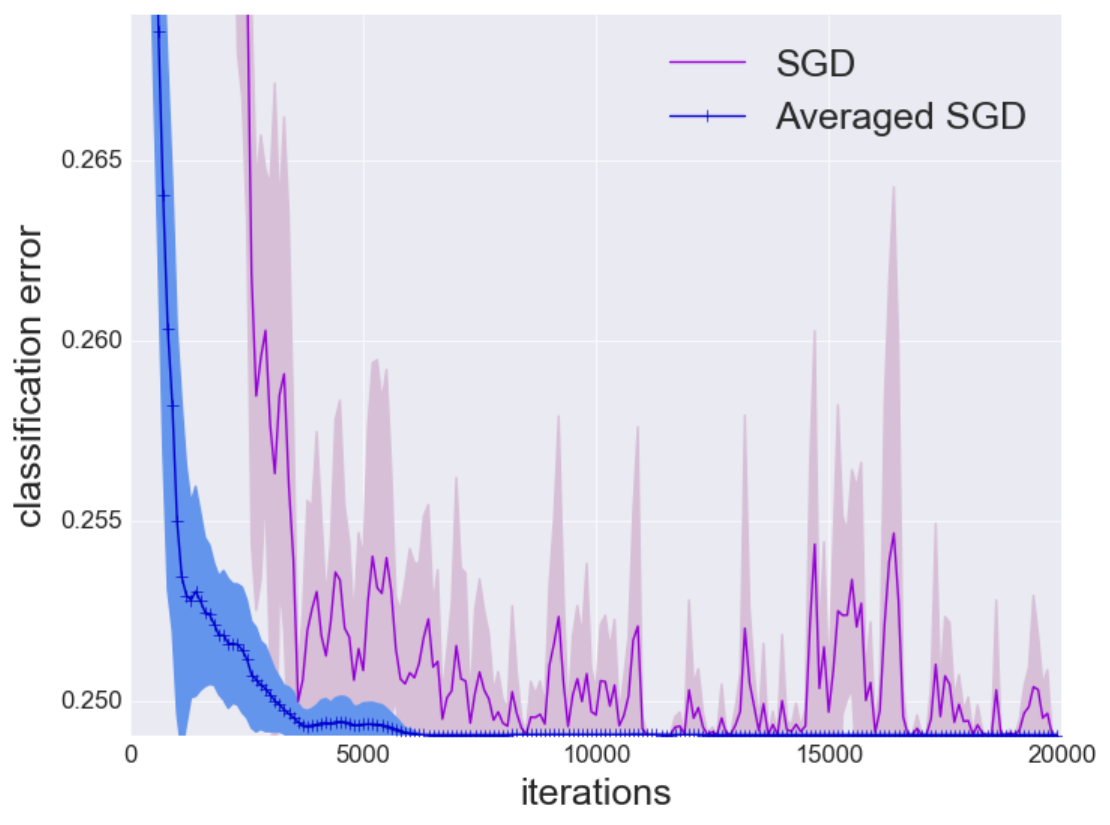}&
  \hspace{-5mm} \includegraphics[angle=0,width=45mm]{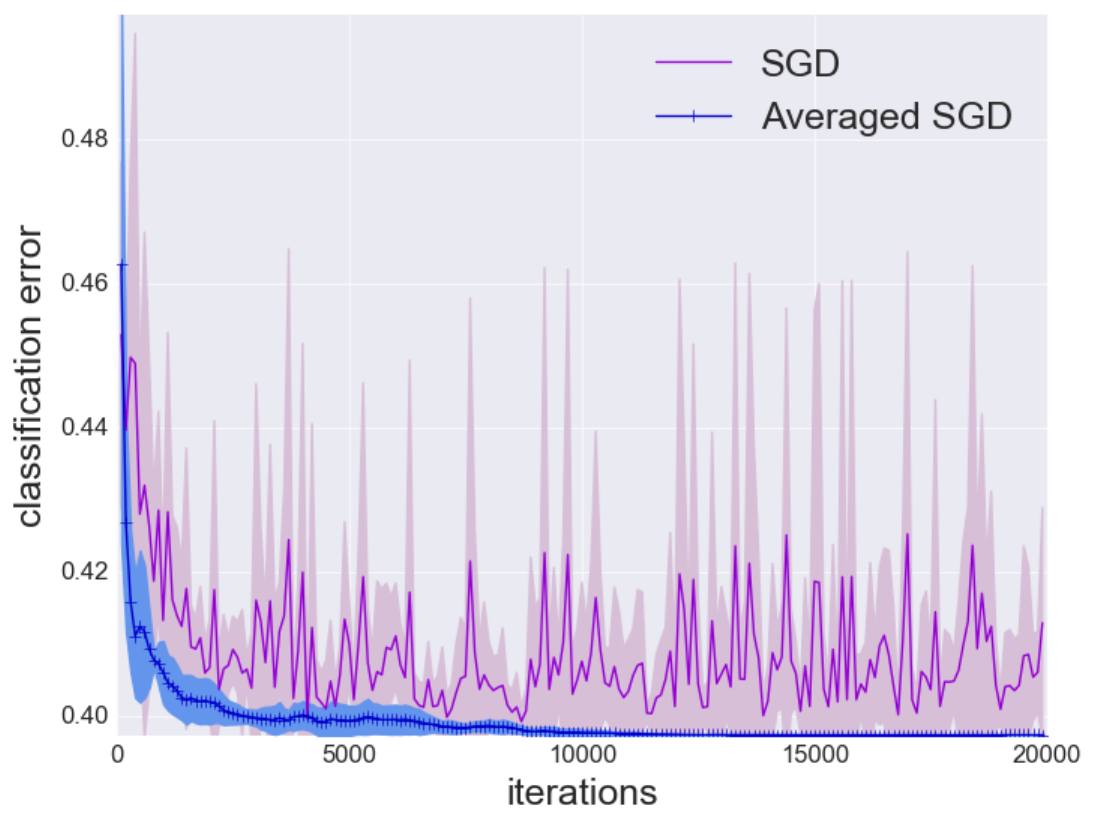}\\
  \hspace{-1mm} \includegraphics[angle=0,width=45mm]{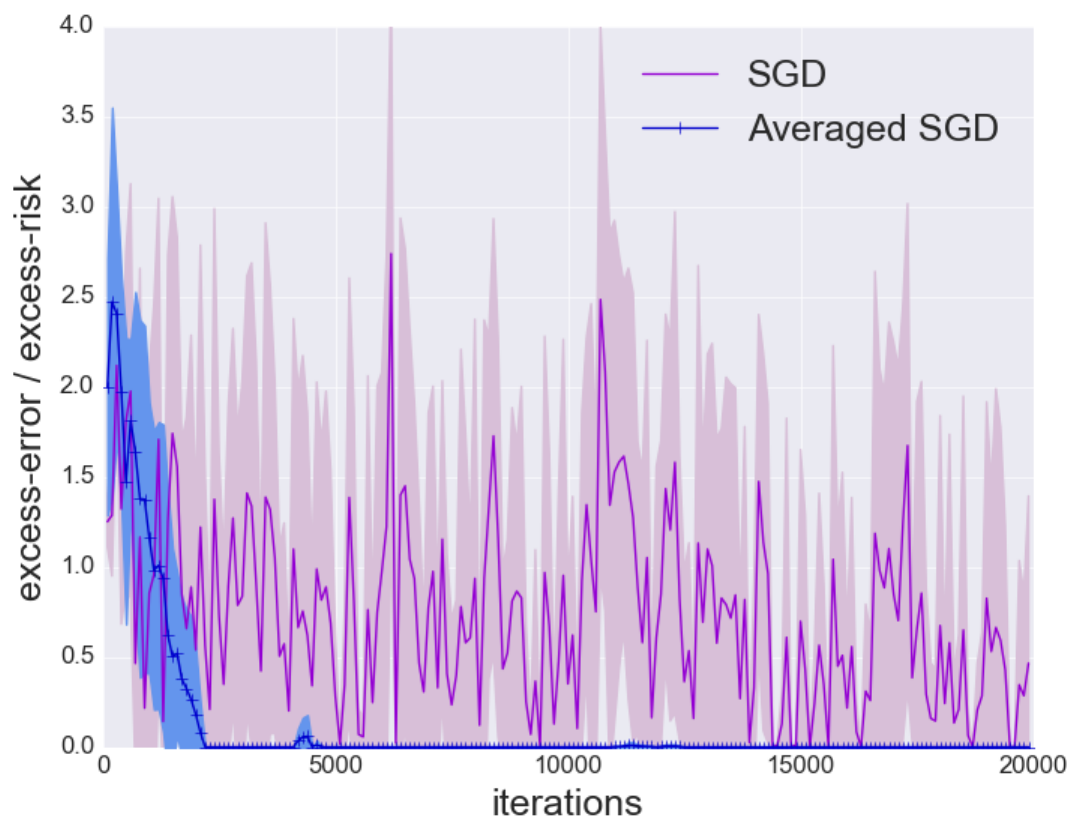}&
  \hspace{-5mm} \includegraphics[angle=0,width=45mm]{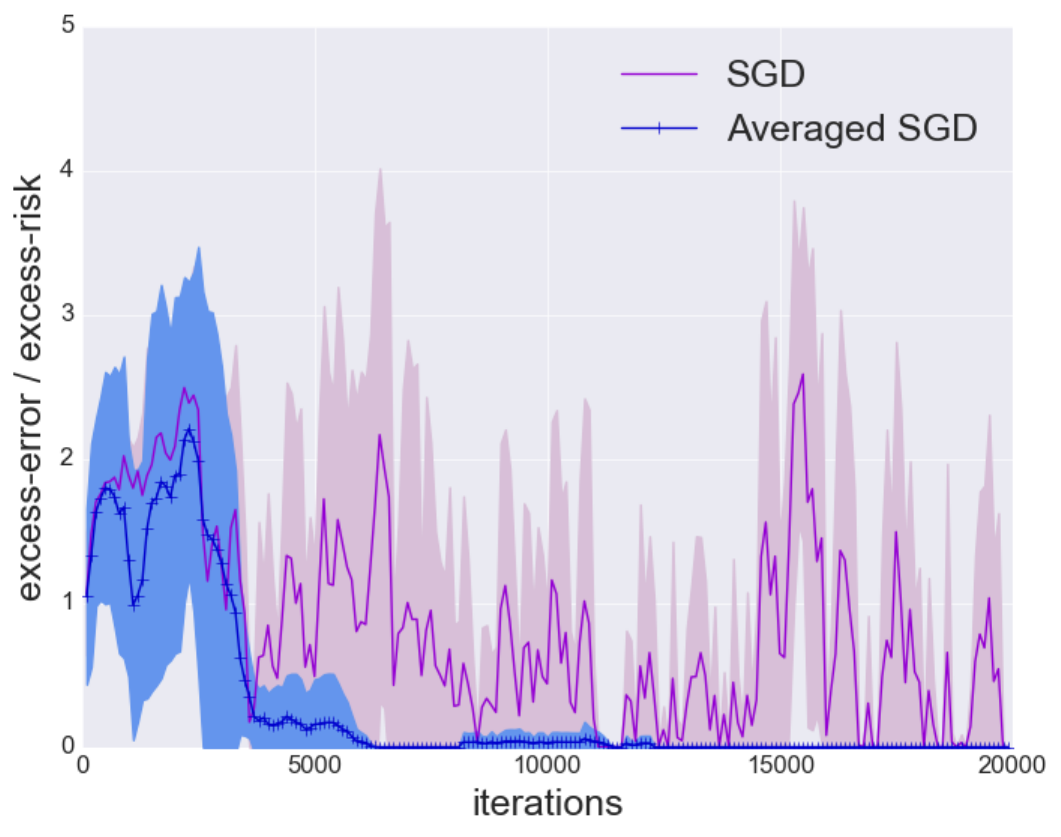}&
  \hspace{-5mm} \includegraphics[angle=0,width=45mm]{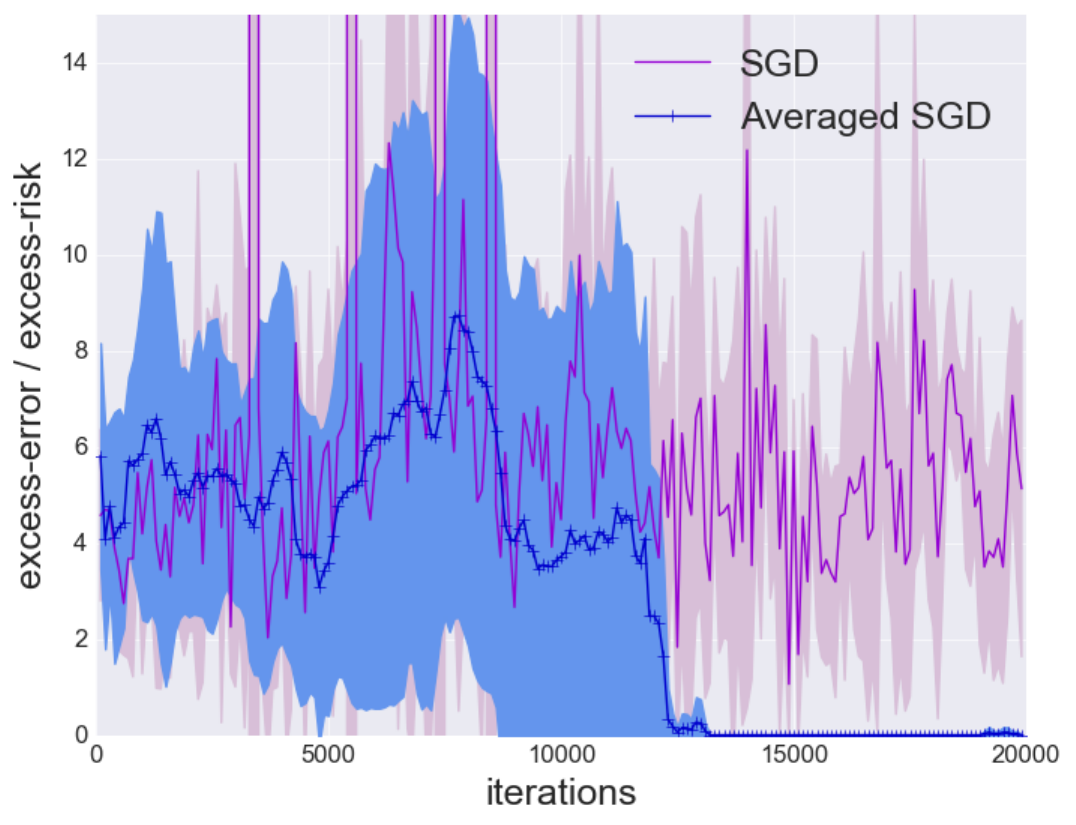}\\       
 \end{tabular}
    }     
    \caption{Learning curves by (averaged) stochastic gradient descent for the binary logistic regression.
    Figure depicts values of loss functions (top), classification errors (middle), and ratios (bottom): excess errors divided by excess risks, for each $\delta$.} 
\label{fig:experimental_results}
\end{center}
\end{figure*}

Experimental results are depicted in Figure \ref{fig:experimental_results}.
The top row shows mean curves of loss functions and the middle row shows mean curves of classification errors with standard deviations obtained by
stochastic gradient descent (purple line) and averaged stochastic gradient descent (blue line).
As seen in Figure \ref{fig:experimental_results}, the bigger low-noise parameter $\delta$ is, the faster the convergence speed of the classification error becomes.
Especially, for the case $\delta=0.4$ of the smallest noise, much faster convergence of the classification error than the loss is observed.
Indeed, Bayes rule for $\error$ is achieved in earlier iterations.
This phenomenon is also confirmed in the bottom row in Figure \ref{fig:experimental_results} that depicts curves of ratios of excess classification errors to excess risks, for each $\delta$.
The fast decreasing of these curves indicate the fast convergence of classification errors.

\section{Conclusion} \label{sec:conclusion}
In this paper, we have shown the exponential convergence property of the expected classification error under a strong low-noise condition, rather than the expected risk for (averaged) stochastic gradient descent.
The main contribution of this work, compared to existing work, is generalizing the loss function to more general differentiable loss functions, 
such as logistic loss, smoothed hinge loss, and exponential loss.
As a result, the acceleration of the method has been shown for typical binary classification problems.
Finally, our analysis has been verified experimentally.
However, some problems are left for future work.
First, we will investigate whether the class of loss functions can be further extended to non-differential functions such as the hinge loss function.
The second is to exclude the assumption that the Bayes rule for the expected risk is included in the given hypothesis class.
Finally, we will explore the convergence speed of more sophisticated methods, such as stochastic accelerated methods and stochastic variance reduced methods
\citep{schmidt2017minimizing,johnson2013accelerating,defazio2014saga,nitanda2014stochastic,allen2017katyusha,murata2017doubly,frostig2015competing} under the strong low-noise condition.
Although these methods have been studied extensively for the empirical risk minimization problems, their performance for the expected risk and the expected classification error are still unclear.

\paragraph{Acknowledgement}
TS was partially supported by MEXT Kakenhi (26280009, 15H05707 and 18H03201), Japan Digital Design and JST-CREST.



\bibliographystyle{apalike}


\ifWITHSUPP

\clearpage
\onecolumn
\renewcommand{\thesection}{\Alph{section}}
\renewcommand{\thesubsection}{\Alph{section}. \arabic{subsection}}
\renewcommand{\thetheorem}{\Alph{theorem}}
\renewcommand{\thelemma}{\Alph{lemma}}
\renewcommand{\theproposition}{\Alph{proposition}}
\renewcommand{\thedefinition}{\Alph{definition}}
\renewcommand{\thecorollary}{\Alph{corollary}}
\renewcommand{\theassumption}{\Alph{assumption}}
\renewcommand{\theexample}{\Alph{example}}

\setcounter{section}{0}
\setcounter{subsection}{0}
\setcounter{theorem}{0}
\setcounter{lemma}{0}
\setcounter{proposition}{0}
\setcounter{definition}{0}
\setcounter{corollary}{0}
\setcounter{assumption}{0}
\setcounter{example}{0}

\part*{\Large{Appendix}}
In this appendix, we provide missing proofs in the paper.

\section{Proof of Proposition \ref{prop:bayes_region}}
We show the convergence of $g_{\lambda}$ to the Bayes rule $g_*$ for $\risk$ in $\hilsp_k$.
\begin{proposition} \label{prop:convergence}
  Let $\risk(g)$ be convex with respect to $g$.
  Suppose assumption {\bf(A5)} holds.
  A minimizer $g_\lambda$ of $\risk_{\lambda}$ converges to the Bayes rule $g_*$ in $\hilsp_k$ as $\lambda \rightarrow 0$.
\end{proposition}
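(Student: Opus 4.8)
The plan is to follow the classical variational argument for convergence of Tikhonov-regularized minimizers, upgrading a weak limit to a strong limit at the end. Throughout I write $\risk_\lambda(g) = \risk(g) + \frac{\lambda}{2}\|g\|_{\hilsp_k}^2$ and use that, since $g_*$ is the Bayes rule, it minimizes $\risk$ over all measurable functions, so in particular $\risk(g) \geq \risk(g_*)$ for every $g \in \hilsp_k$ (note $g_* \in \hilsp_k$ by {\bf(A5)}).

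First I would exploit the optimality of $g_\lambda$. Since $g_\lambda$ minimizes $\risk_\lambda$ and $g_* \in \hilsp_k$, we have $\risk_\lambda(g_\lambda) \leq \risk_\lambda(g_*)$, i.e. $\risk(g_\lambda) + \frac{\lambda}{2}\|g_\lambda\|_{\hilsp_k}^2 \leq \risk(g_*) + \frac{\lambda}{2}\|g_*\|_{\hilsp_k}^2$. Combining this with $\risk(g_\lambda) \geq \risk(g_*)$ yields simultaneously the uniform norm bound $\|g_\lambda\|_{\hilsp_k} \leq \|g_*\|_{\hilsp_k}$ (so the family is bounded) and $0 \leq \risk(g_\lambda) - \risk(g_*) \leq \frac{\lambda}{2}\|g_*\|_{\hilsp_k}^2 \to 0$, that is, $\risk(g_\lambda) \to \risk(g_*)$ as $\lambda \to 0$.

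Next I would pass to the limit in the weak topology. Fix any sequence $\lambda_n \downarrow 0$; by the uniform bound $\|g_{\lambda_n}\|_{\hilsp_k} \leq \|g_*\|_{\hilsp_k}$ and the reflexivity of $\hilsp_k$, every subsequence has a further subsequence converging weakly to some $\bar g \in \hilsp_k$. Because $\risk$ is convex and continuous (continuity being available from the smoothness of the loss), it is weakly lower semicontinuous, so $\risk(\bar g) \leq \liminf_n \risk(g_{\lambda_n}) = \risk(g_*)$; minimality then forces $\risk(\bar g) = \risk(g_*)$, making $\bar g$ a Bayes rule, whence $\bar g = g_*$ by the uniqueness in {\bf(A5)}. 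Since every weak cluster point equals $g_*$, the whole family converges weakly, $g_\lambda \rightharpoonup g_*$. To finish, I would upgrade to strong convergence: weak lower semicontinuity of the norm gives $\|g_*\|_{\hilsp_k} \leq \liminf \|g_\lambda\|_{\hilsp_k}$, while Step~1 gives $\limsup \|g_\lambda\|_{\hilsp_k} \leq \|g_*\|_{\hilsp_k}$, so $\|g_\lambda\|_{\hilsp_k} \to \|g_*\|_{\hilsp_k}$. Expanding $\|g_\lambda - g_*\|_{\hilsp_k}^2 = \|g_\lambda\|_{\hilsp_k}^2 - 2\pd<g_\lambda, g_*>_{\hilsp_k} + \|g_*\|_{\hilsp_k}^2$ and using $\pd<g_\lambda, g_*>_{\hilsp_k} \to \|g_*\|_{\hilsp_k}^2$ (from weak convergence) together with the norm convergence gives $\|g_\lambda - g_*\|_{\hilsp_k} \to 0$, which is the claim.

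The main obstacle will be the passage through the weak topology: one must justify weak lower semicontinuity of $\risk$ (which requires $\risk$ to be lower semicontinuous, a property supplied here by the continuity/differentiability of the loss in {\bf(A1)}) and must handle the ``up to zero-measure sets'' clause in the uniqueness of $g_*$. The latter is resolved under {\bf(A2)}, where $\supp(\tpr_\featuresp) = \featuresp$ forces two elements of the RKHS agreeing $\tpr_\featuresp$-almost surely to coincide, so ``a.e.\ unique'' becomes ``unique in $\hilsp_k$''. The norm bound obtained in Step~1 is precisely what enables the weak-to-strong upgrade and is the crux of the argument.
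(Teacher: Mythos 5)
Your proof is correct, but it takes a genuinely different route from the paper's. You run the classical Tikhonov argument: the optimality comparison $\risk_\lambda(g_\lambda)\leq\risk_\lambda(g_*)$ plus $\risk(g_\lambda)\geq\risk(g_*)$ gives the uniform bound $\|g_\lambda\|_{\hilsp_k}\leq\|g_*\|_{\hilsp_k}$ and $\risk(g_\lambda)\to\risk(g_*)$; you then extract weak cluster points by reflexivity, identify them all with $g_*$ via weak lower semicontinuity of $\risk$ and the uniqueness in {\bf(A5)}, and upgrade weak to strong convergence by the Radon--Riesz property. The paper instead avoids weak topologies entirely: it shows that along a decreasing sequence $\lambda_i\downarrow 0$ the norms $\|g_{\lambda_i}\|_{\hilsp_k}$ are monotonically increasing and bounded by $\|g_*\|_{\hilsp_k}$ (hence convergent) and the values $\risk(g_{\lambda_i})$ are monotonically decreasing to $\risk(g_*)$, and then uses the $\lambda_i$-strong convexity of $\risk_{\lambda_i}$ to derive the quantitative estimate $\|g_{\lambda_j}-g_{\lambda_i}\|_{\hilsp_k}^2\leq\|g_{\lambda_j}\|_{\hilsp_k}^2-\|g_{\lambda_i}\|_{\hilsp_k}^2\leq 2\|g_*\|_{\hilsp_k}(\|g_{\lambda_j}\|_{\hilsp_k}-\|g_{\lambda_i}\|_{\hilsp_k})$, so that $\{g_{\lambda_i}\}$ is Cauchy and converges strongly; the limit is then identified with $g_*$ by continuity of $\risk$ and uniqueness. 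Your approach is the more standard and portable one (it would survive in a reflexive Banach space), while the paper's is more elementary and gives monotonicity of $\|g_\lambda\|_{\hilsp_k}$ and an explicit Cauchy rate in terms of the norm gap, at the price of exploiting the Hilbert/strong-convexity structure. Both arguments quietly need continuity of $\risk$ on $\hilsp_k$ (available from {\bf(A1)}--{\bf(A2)} even though the proposition only states convexity), and both must resolve the ``up to zero-measure sets'' clause in {\bf(A5)}; you flag these points explicitly, which is a mild improvement in rigor over the paper's presentation.
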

\begin{proof}
Let $\{\lambda_i\}_{i = 1,2,\ldots}$ be a positive decreasing sequence tending to zero in $\realsp$.
Let $i,j$ be arbitrary indices such that $i<j$, i.e., $\lambda_i \geq \lambda_j $.
For $g \in \hilsp_k$ satisfying $\|g\|_{\hilsp_k} < \|g_{\lambda_i}\|_{\hilsp_k}$,
by subtracting $\frac{\lambda_i-\lambda_j}{2}\|g_{\lambda_i}\|_{\hilsp_k}^2 > \frac{\lambda_i-\lambda_j}{2}\|g\|_{\hilsp_k}^2$ from
$\risk(g)+\frac{\lambda_i}{2}\|g\|_{\hilsp_k}^2 \geq \risk(g_{\lambda_i})+\frac{\lambda_i}{2}\|g_{\lambda_i}\|_{\hilsp_k}^2$, 
we get
\begin{equation*}
\risk(g)+\frac{\lambda_j}{2}\|g\|_{\hilsp_k}^2 > \risk(g_{\lambda_i})+\frac{\lambda_j}{2}\|g_{\lambda_i}\|_{\hilsp_k}^2.
\end{equation*}
This implies that if $\|g\|_{\hilsp_k} < \|g_{\lambda_i}\|_{\hilsp_k}$, then $g$ is not optimal point of $\risk_{\lambda_j}$, hence, $\|g_{\lambda_j}\|_{\hilsp_k} \geq \|g_{\lambda_i}\|_{\hilsp_k}$.
The boundedness of this sequence is also confirmed because $g_* \in \hilsp_k$ and for $\forall \lambda>0$,
\begin{equation} \label{proof:convergent_sequence}
  \risk(g_*) + \frac{\lambda}{2}\|g_\lambda\|_{\hilsp_k}^2
  \leq \risk(g_\lambda) + \frac{\lambda}{2}\|g_\lambda\|_{\hilsp_k}^2
  \leq \risk(g_*) + \frac{\lambda}{2}\|g_*\|_{\hilsp_k}^2,
\end{equation}
which implies an inequality $\|g_\lambda\|_{\hilsp_k} \leq \|g_*\|_{\hilsp_k}$. 
Namely, $\{ \|g_{\lambda_i}\|_{\hilsp_k}\}_{i=1,2,\ldots}$ is a bounded increasing sequence and has the limit.
On the other hand, $\{ \risk(g_{\lambda_i})\}_{i=1,2,\ldots}$ is a decreasing sequence with the limit corresponding to $\risk(g_*)$.
Indeed, since $\risk(g_{\lambda_j})+\frac{\lambda_j}{2}\|g_{\lambda_j}\|_{\hilsp_k}^2 \leq \risk(g_{\lambda_i})+\frac{\lambda_j}{2}\|g_{\lambda_i}\|_{\hilsp_k}^2$,
we see
\[ 0 \leq \frac{\lambda_j}{2}( \|g_{\lambda_j}\|_{\hilsp_k}^2 - \|g_{\lambda_i}\|_{\hilsp_k}^2) \leq \risk(g_{\lambda_i}) - \risk(g_{\lambda_j}). \]
Moreover, from the inequality (\ref{proof:convergent_sequence}), $\risk(g_{\lambda_i})$ converges to $\risk(g_*)$.

We next show that the convergence of a sequence $\{g_{\lambda_i}\}_{i=1,2,\ldots}$.
From the strong convexity of $\risk_{\lambda_i}(g)$, we have
\[ \risk(g_{\lambda_i}) + \frac{\lambda_i}{2}\| g_{\lambda_i}\|_{\hilsp_k}^2
  + \frac{\lambda_i}{2}\| g_{\lambda_j} - g_{\lambda_i}\|_{\hilsp_k}^2
  \leq \risk(g_{\lambda_j}) + \frac{\lambda_i}{2}\| g_{\lambda_j}\|_{\hilsp_k}^2. \]
Using $\risk(g_{\lambda_j}) \leq \risk(g_{\lambda_i})$, we get
\[ \| g_{\lambda_j} - g_{\lambda_i}\|_{\hilsp_k}^2
  \leq \| g_{\lambda_j}\|_{\hilsp_k}^2 - \| g_{\lambda_i}\|_{\hilsp_k}^2
  \leq 2\| g_* \|_{\hilsp_k} (  \| g_{\lambda_j}\|_{\hilsp_k} - \|g_{\lambda_i}\|_{\hilsp_k} ) . \]
Since, $\{ \| g_{\lambda_i}\|_{\hilsp_k} \}_{i=1,2,\ldots}$ is a convergent sequence, it is also a Cauchy sequence.
As a result, a sequence $\{g_{\lambda_i}\}_{i=1,2,\ldots}$ is Cauchy in $\hilsp_k$ and has a limit point $g_\infty \in \hilsp_k$.
It follows from the continuity of $\risk$ that $\risk(g_\infty)=\lim_{i\rightarrow \infty}\risk(g_{\lambda_i})$.
Recalling $\lim_{i\rightarrow \infty}\risk(g_{\lambda_i}) = \risk(g_*)$ and the uniqueness of the Bayes rule $g_*$, we conclude $g_\infty = g_*$ up to zero measure sets. 
\end{proof}

We now give a proof of Proposition \ref{prop:bayes_region}.
\begin{proof}[ Proof of Proposition \ref{prop:bayes_region} ]
Noting that $g(x)=\pd<g,k(x,\cdot)>_{\hilsp_k}$ for arbitrary function $g \in \hilsp_k$ and $k(x,\cdot) \in \hilsp_k$ by the definition of kernel function, we get
\begin{equation} \label{eq:norm_relationship}
  \|g\|_{L_\infty} = \sup_{x \in \featuresp} |g(x)|
  \leq \|g\|_{\hilsp_k}\|k(x,\cdot)\|_{\hilsp_k} \leq R \|g\|_{\hilsp_k}.
\end{equation}  
Since, $g_\lambda$ converges to $g_*$ in $\hilsp_k$ from Proposition \ref{prop:convergence}, there exists $\lambda>0$ such that
\[ \|g_\lambda - g_*\|_{\hilsp_k} \leq \frac{m(\delta)}{2R}. \]
Thus, for arbitrary $g \in \hilsp_k$ satisfying $\|g - g_\lambda\|_{\hilsp_k} \leq \frac{m(\delta)}{2R}$, we have
\[ \|g - g_*\|_{L_\infty}
  \leq R\|g - g_*\|_{\hilsp_k}
  \leq R\left(\|g - g_\lambda\|_{\hilsp_k} + \|g_\lambda - g_*\|_{\hilsp_k} \right)
  \leq m(\delta). \]
Since, $m(\delta) \leq |g_*(X)|$ almost surely, we get $\sgn(g_*(X))=\sgn(g(X))$ almost surely for $g \in \hilsp_k$ such that 
$\| g-g_\lambda \|_{\hilsp} \leq \frac{m(\delta)}{2R}$,
that is, $g$ is also the Bayes rule for $\error$.
\end{proof}

\section{Proof of Theorem \ref{thm:sgd_exp_convergence}}
In this section, we give proofs of auxiliary statements needed for the main theorem meaning the exponential convergence of stochastic gradient descent.
We here prove convergence of expected functions obtained by stochastic gradient descent.

\begin{proof}[Proposition \ref{prop:convergence_sgd}]
By $(L+\lambda)$-Lipschitz smoothness $\risk_\lambda$, we have
\begin{align} \label{eq:sgd_inequality_smooth}
  \expec[ \risk_\lambda(g_t - \eta_t G_\lambda(g_t, z_t))]  
  &\leq \expec[\risk_\lambda(g_t)] - \eta_t \expec[ \pd< \nabla \risk_\lambda(g_t),G_\lambda(g_t, z_t) >_{\hilsp_k}] + \frac{(L+\lambda)\eta_t^2}{2}\expec\| G_\lambda(g_t,z_t)\|_{\hilsp_k}^2 \notag \\
  &\leq \expec[\risk_\lambda(g_t)] - \eta_t \expec\| \nabla \risk_\lambda(g_t) \|_{\hilsp_k}^2 + \frac{(L+\lambda)\eta_t^2}{2}(\expec\| \nabla \risk_\lambda (g_t)\|_{\hilsp_k}^2 + \sigma^2) \notag \\
  &\leq \expec[\risk_\lambda(g_t)] - \frac{\eta_t}{2} \expec\| \nabla \risk_\lambda(g_t) \|_{\hilsp_k}^2 + \frac{(L+\lambda)\eta_t^2\sigma^2}{2},
\end{align}
where we used $\eta_t \leq 1/(L+\lambda)$ for the last inequality.
On the other hand, by the strong convexity of $\risk_\lambda$, we have for $\forall g \in \hilsp_k$,
\[ \risk_\lambda(g_t) + \pd< \nabla \risk_\lambda(g_t),g-g_t>_{\hilsp_k} + \frac{\lambda}{2}\|g-g_t\|_{\hilsp_k}^2 \leq \risk_\lambda(g). \]
Minimizing both sides with respect to $g$ in $\hilsp_k$, we have
\begin{equation} \label{eq:sgd_inequality_convex}
  \risk_\lambda(g_t) - \frac{1}{2\lambda}\| \nabla \risk_\lambda(g_t)\|_{\hilsp_k}^2 \leq \risk_\lambda(g_\lambda).
\end{equation}
By combining two inequalities (\ref{eq:sgd_inequality_smooth}) and (\ref{eq:sgd_inequality_convex}) and subtracting $\risk_\lambda(g_\lambda)$, we get
\begin{equation} \label{eq:sgd_reduction}
\expec[\risk_\lambda(g_{t+1})] - \risk_\lambda(g_\lambda) \leq (1-\eta_t\lambda)\left( \expec[ \risk_\lambda(g_t)] -\risk_\lambda(g_\lambda)\right) + \frac{(L+\lambda)\eta_t^2\sigma^2}{2}.
\end{equation}

We now show the following convergence rate by induction on $t$.
\begin{equation} \label{eq:convergence_risk}
  \expec[ \risk_\lambda(g_{t}) ] - \risk_\lambda(g_\lambda) \leq \frac{\nu}{\gamma + t}.
\end{equation}
For $t=1$, it is clearly true from the choice of $\nu$.
We suppose that the inequality (\ref{eq:convergence_risk}) is true for $t$.
We denote $\hat{t} = \gamma + t$ for simplicity.
Then, we have that from the inequality (\ref{eq:sgd_reduction}) and $\eta_t=2/\lambda \hat{t}$,
\begin{align*}
  \expec[ \risk_\lambda(g_{t+1}) ] - \risk_\lambda(g_\lambda)
  &\leq \left( 1-\frac{2}{\hat{t}} \right) \frac{\nu}{\hat{t}} + \frac{2(L+\lambda)\sigma^2}{\lambda^2 \hat{t}^2}\\
  &= \frac{(\hat{t}-1)\nu}{\hat{t}^2} - \frac{\nu}{\hat{t}^2} + \frac{2(L+\lambda)\sigma^2}{\lambda^2 \hat{t}^2}\\
  &\leq \frac{\nu}{\hat{t}+1},
\end{align*}
where we used $\hat{t}^2>(\hat{t}+1)(\hat{t}-1)$ and the definition of $\nu$.
Thus, the inequality (\ref{eq:convergence_risk}) is true for all $T \geq 1$.
From the strong convexity and Jensen's inequality for $\risk_\lambda$, we have
\begin{equation*}
  \|\expec[g_t]-g_\lambda\|_{\hilsp_k}^2 \leq \frac{2}{\lambda}(\risk_\lambda(\expec[g_t]) - \risk_\lambda(g_\lambda))
  \leq \frac{2}{\lambda}(\expec[\risk_\lambda(g_t)] - \risk_\lambda(g_\lambda)).
\end{equation*}
This finishes the proof of the proposition.
\end{proof}

As argued in the paper, the proof of Proposition \ref{prop:martingale_bound_sgd} is reduced to bounding $\|g_{T+1} - g_{T+1}^t\|_{\infty}$.
The following proposition is useful for that purpose.
Let $g_s^t$ ($s\geq t \in \{1,\ldots,T+1\}$) be the $s$-th iterate depending on $(Z_1,\ldots,Z_{t-1},Z_t',Z_{t+1},\ldots,Z_s)$.

\begin{proposition} \label{prop:contraction}
  Suppose Assumptions {\bf(A1)} and {\bf(A2)} hold.
  We consider Algorithm \ref{alg:sgd} without the averaging option and with a decreasing learning rates $\eta_t$.
  We assume that $\|g_1\|_{\hilsp_k}\leq (2\eta_1 + 1/\lambda)MR$ and $\eta_1 \leq \min\{1/L, 1/2\lambda\}$.
  Then, for $t \in \{1,\ldots,T\}$, it follows that
\begin{enumerate}
\item $\| g_{t+1} - g_{t+1}^t\|_{\hilsp_k} \leq 6MR\eta_t$,
\item $\| g_{s+1} - g_{s+1}^t\|_{\hilsp_k} \leq (1-\eta_s\lambda)\| g_{s} - g_{s}^t\|_{\hilsp_k}$\ \ for $s\geq t+1$.
\end{enumerate}
\end{proposition}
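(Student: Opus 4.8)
The plan is to view $(g_s)_{s}$ and $(g_s^t)_{s}$ as two runs of the same randomized update map that are fed identical data except for the single sample at step $t$, and to track how this one-sample perturbation grows. The starting observation is that the two runs coincide up to the perturbation: for $s\le t$ both iterates are computed from $Z_1,\dots,Z_{s-1}$ only, so $g_t=g_t^t$.

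For the first inequality I would use that, at step $t$, the updates differ only through the sample, giving
\[ g_{t+1}-g_{t+1}^t=-\eta_t\left(G_\lambda(g_t,Z_t)-G_\lambda(g_t,Z_t')\right). \]
Both stochastic gradients are evaluated at the common point $g_t$, so the regularization parts $\lambda g_t$ cancel and only a difference of two per-sample loss gradients remains. Each has the form $\partial_\zeta l(g_t(X),Y)k(X,\cdot)$, of norm at most $MR$ by $|\partial_\zeta l|\le M$ in {\bf(A1)} and $\|k(X,\cdot)\|_{\hilsp_k}=\sqrt{k(X,X)}\le R$ in {\bf(A2)}; the triangle inequality then gives $\|g_{t+1}-g_{t+1}^t\|_{\hilsp_k}\le 2MR\eta_t$, which is within the claimed $6MR\eta_t$. (If one does not exploit the cancellation, the constant $6$ still follows after first proving the uniform bound $\|g_s\|_{\hilsp_k}\le(2\eta_s+\tfrac1\lambda)MR$ by induction from $\|g_{s+1}\|_{\hilsp_k}\le(1-\eta_s\lambda)\|g_s\|_{\hilsp_k}+\eta_s MR$, since then $\|G_\lambda(g_t,z)\|_{\hilsp_k}\le MR+\lambda\|g_t\|_{\hilsp_k}\le 3MR$ once $\eta_t\lambda\le\tfrac12$.)

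For the second inequality I would exploit that, for every $s\ge t+1$, both runs now use the same sample $Z_s$, so $g_{s+1}$ and $g_{s+1}^t$ are the images of $g_s$ and $g_s^t$ under the common map $\Phi_s(g)=(1-\eta_s\lambda)g-\eta_s\nabla\ell_{Z_s}(g)$, where $\nabla\ell_{Z}(g)\defeq\partial_\zeta l(g(X),Y)k(X,\cdot)$. Expanding the squared norm of $\Phi_s(g_s)-\Phi_s(g_s^t)$ produces
\[ (1-\eta_s\lambda)^2\|g_s-g_s^t\|_{\hilsp_k}^2-2(1-\eta_s\lambda)\eta_s\langle g_s-g_s^t,\Delta_s\rangle_{\hilsp_k}+\eta_s^2\|\Delta_s\|_{\hilsp_k}^2, \]
with $\Delta_s=\nabla\ell_{Z_s}(g_s)-\nabla\ell_{Z_s}(g_s^t)$, and the task is to show the last two terms sum to something nonpositive. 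This is precisely co-coercivity of the convex, smooth per-sample loss: $\langle g_s-g_s^t,\Delta_s\rangle_{\hilsp_k}\ge\tfrac1L\|\Delta_s\|_{\hilsp_k}^2$, which combines the monotonicity of $\zeta\mapsto\partial_\zeta l(\zeta,y)$ (convexity) with its Lipschitzness (smoothness) and the bound $k(X_s,X_s)\le R^2$. Substituting, the two terms become $\eta_s\big(\eta_s-\tfrac{2(1-\eta_s\lambda)}{L}\big)\|\Delta_s\|_{\hilsp_k}^2$, which is nonpositive because $\eta_s\le\eta_1\le 1/L$ and $\eta_s\lambda\le\tfrac12$ force $\eta_s\le 1/L\le 2(1-\eta_s\lambda)/L$; taking square roots gives the contraction.

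I expect the contraction of Part 2 to be the main obstacle. Convexity by itself only yields monotonicity, which controls the sign of the cross term but not the quadratic term; one genuinely needs the full co-coercivity inequality coupling convexity and smoothness. The delicate point is that the relevant smoothness is the \emph{per-sample} one (the operator norm of $\phi''(Yg(X))\,k(X,\cdot)\otimes k(X,\cdot)$, controlled via $k\le R^2$), which must be identified with $L$ rather than the smoothness of the averaged risk $\risk$; reconciling this with both step-size restrictions $\eta_1\le\min\{1/L,1/2\lambda\}$ is exactly what makes the two hypotheses on $\eta_1$ necessary.
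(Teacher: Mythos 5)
Your proposal is correct and follows essentially the same route as the paper: the first bound via the uniform bound $\|G_\lambda(g_t,z)\|_{\hilsp_k}\le 3MR$ (your sharper $2MR\eta_t$ via cancellation of the $\lambda g_t$ terms also works), and the contraction via expanding the squared norm and invoking co-coercivity together with $\eta_s\le 1/L$ and $\eta_s\lambda\le 1/2$. Your remark that the co-coercivity must hold for the \emph{per-sample} gradient $\partial_g l(\cdot,z)$, not merely for the averaged risk $\risk$, correctly identifies a subtlety that the paper's own proof glosses over when it applies inequality (\ref{eq:smooth_inequality2}) to $l(\cdot,z)$ while {\bf(A1)} only asserts smoothness of $\risk$.
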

\begin{proof}
By the assumptions, we find that the stochastic gradient of $l$ in $\hilsp_k$ is bounded as follows:
\[ \| \partial_\zeta l(g(x),y)k(x,\cdot) \|_{\hilsp_k} \leq MR. \]
Therefore, if $\|g_t\|_{\hilsp_k} \geq \frac{1}{\lambda}MR$, then
\begin{align*}
  \|g_{t+1}\|_{\hilsp_k}
  &= \|g_{t} - \eta_t \partial_\zeta l(g(X_t),Y_t)k(X_t,\cdot) - \eta_t \lambda g_t \|_{\hilsp} \\
  &\leq (1-\eta_t \lambda)\|g_{t}\|_{\hilsp_k} + \eta_t MR \\
  &\leq \|g_t\|_{\hilsp_k}.
\end{align*}
This means a generated sequence $\{g_t\}_{t=1,\ldots,T+1}$ is included in a closed ball centered at the origin with radius $(2\eta_1 + 1/\lambda)MR$
as long as an initial function $g_1$ is contained in this ball.
Thus, the norm of $G_\lambda(g_t,Z_t)$ is bounded by $2(1 + \lambda \eta_1)MR \leq 3MR$.

The first statement can be shown as follows: since $g_t = g_t^t$,
\begin{align*}
  \| g_{t+1} - g_{t+1}^t\|_{\hilsp_k}
  = \eta_t \| G_\lambda(g_t,Z_t) - G_\lambda(g_t,Z_t')  \|_{\hilsp_k} 
  \leq 6\eta_tMR.
\end{align*}

We next show the second statement.
The Lipschitz smoothness of $\risk$ leads to the following inequality which can be confirmed by naturally extending the proof of \cite{nes2004} to the Hilbert space.
Let $\partial_g l(g,z)$ denote the gradient of $l(g,z)$ with respect to $g$ in $\hilsp_k$.
Then, we have for $\forall g, \forall g' \in \hilsp_k$, 
\begin{equation} \label{eq:smooth_inequality2}
  \pd< \partial_g l(g,z) - \partial_g l(g',z), g - g' >_{\hilsp_k} \geq \frac{1}{L}\| \partial_g l(g,z) - \partial_g l(g',z)  \|_{\hilsp_k}^2.
\end{equation}
Thus, we have that for $s\geq t+1$,
\begin{align*}
  \| g_{s+1} - g_{s+1}^t\|_{\hilsp_k}^2
  &= \| (1-\eta_s\lambda)(g_{s} - g_{s}^t) - \eta_s( \partial_g l(g_s,Z_s) - \partial_g l(g_s^t,Z_s)) \|_{\hilsp_k}^2 \\
  &= (1-\eta_s \lambda)^2\| g_{s} - g_{s}^t \|_{\hilsp_k}^2 - 2\eta_s(1-\eta_s \lambda)\pd< \partial_g l(g_s,Z_s) - \partial_g l(g_s^t,Z_s) , g_{s} - g_{s}^t> \\
  &+ \eta_s^2\| \partial_g l(g_s,Z_s) - \partial_g l(g_s^t,Z_s) \|_{\hilsp_k}^2 \\
  &\leq (1-\eta_s \lambda)^2\| g_{s} - g_{s}^t \|_{\hilsp_k}^2 - \eta_s\left( \frac{1}{L} - \eta_s\right)\| \partial_g l(g_s,Z_s) - \partial_g l(g_s^t,Z_s) \|_{\hilsp_k}^2 \\
  &\leq (1-\eta_s \lambda)^2\| g_{s} - g_{s}^t \|_{\hilsp_k}^2, 
\end{align*}
where we used the inequality (\ref{eq:smooth_inequality2}) and conditions on learning rates.
\end{proof}

Utilizing this proposition, the stable property of stochastic gradient descent is shown.

\begin{proof}[Proof of Proposition \ref{prop:martingale_bound_sgd}]
From Proposition \ref{prop:contraction}, we immediately obtain the bound: for $t \in \{1,\ldots,T\}$,
\begin{equation} \label{eq:sgd_contraction}
\| g_{T+1} - g_{T+1}^t \|_{\hilsp_k} \leq 6MR\eta_t \prod_{s=t+1}^T(1-\eta_s \lambda).
\end{equation}
From the following inequality,
\[ \prod_{s=t+1}^T(1-\eta_s \lambda)
  = \prod_{s=t+1}^T\frac{\gamma + s - 2}{\gamma + s} < \frac{\gamma+t}{\gamma + T}, \]
where the last inequality hold clearly by expanding the product, 
the right hand side of the inequality (\ref{eq:sgd_contraction}) is upper bounded as follows
\[ 6MR\eta_t \prod_{s=t+1}^T(1-\eta_s \lambda)
  \leq 6MR\eta_t \frac{\gamma + t}{\gamma + T} 
  = \frac{12MR}{\lambda (\gamma + T)}. \]

We finally obtain the desired bound:
\[ \sum_{t=1}^T\| D_t \|_\infty^2
  \leq \sum_{t=1}^T \frac{144M^2R^2}{\lambda^2 (\gamma + T)^2}
  \leq \frac{144M^2R^2}{\lambda^2(\gamma+T)}.\]
\end{proof}

\section{Proof of Theorem \ref{thm:asgd_exp_convergence}}
In this section we provide auxiliary results for showing Theorem \ref{thm:asgd_exp_convergence}.
Using them, we can show the theorem in the same way as in the case of stochastic gradient descent without averaging.

We first give a convergence rate of expected functions obtained by averaged stochastic gradient descent.
Recall that $\overline{g}_{T+1}=\sum_{t=1}^{T+1}\frac{2(\gamma + t - 1)}{(2\gamma + T)(T+1)}g_t$.

\begin{proposition}
  Let the loss function $\phi$ be convex, that is, let $l(g(x),y)$ be also convex with respect to $g$.
  Consider Algorithm \ref{alg:sgd} with the averaging option.
  Learning rates and averaging weights are $\eta_t = 2/\lambda(\gamma + t)$ and $\alpha_t = \frac{2(\gamma+t-1)}{(2\gamma +T)(T+1)}$, respectively.
  Then, it follows that
  \[ \| \expec[ \overline{g}_{T+1}] - g_\lambda \|_{\hilsp_k}^2
    \leq \frac{2}{\lambda}\left( \frac{18M^2R^2}{\lambda (2\gamma + T)}  + \frac{\lambda \gamma(\gamma - 1)}{2(2\gamma + T)(T+1)} \| g_1 - g_\lambda \|_{\hilsp_k}^2 \right).  \]
\end{proposition}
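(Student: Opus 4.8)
The plan is to run the standard weighted-averaging argument for strongly convex stochastic optimization (following \cite{lacoste2012simpler,bottou2018optimization}), using only convexity of $l$, the $\lambda$-strong convexity of $\risk_\lambda$, and a uniform bound on the stochastic gradient. Notice that $L$-smoothness is never invoked, consistent with the Remark after Theorem \ref{thm:asgd_exp_convergence}. First I would write the single-step identity: expanding $g_{t+1} = g_t - \eta_t G_\lambda(g_t,z_t)$ gives $\|g_{t+1}-g_\lambda\|_{\hilsp_k}^2 = \|g_t-g_\lambda\|_{\hilsp_k}^2 - 2\eta_t\pd<G_\lambda(g_t,z_t),g_t-g_\lambda>_{\hilsp_k} + \eta_t^2\|G_\lambda(g_t,z_t)\|_{\hilsp_k}^2$. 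Taking conditional and then total expectation, using unbiasedness $\expec[G_\lambda(g_t,z_t)\mid z_1,\ldots,z_{t-1}]=\nabla\risk_\lambda(g_t)$ and $\lambda$-strong convexity $\pd<\nabla\risk_\lambda(g_t),g_t-g_\lambda>_{\hilsp_k}\geq u_t + \frac{\lambda}{2}\|g_t-g_\lambda\|_{\hilsp_k}^2$ (with $u_t\defeq\expec[\risk_\lambda(g_t)]-\risk_\lambda(g_\lambda)$), I can isolate the gap as $u_t \leq \frac{1-\eta_t\lambda}{2\eta_t}a_t - \frac{1}{2\eta_t}a_{t+1} + \frac{\eta_t}{2}b_t$, where $a_t\defeq\expec[\|g_t-g_\lambda\|_{\hilsp_k}^2]$ and $b_t\defeq\expec[\|G_\lambda(g_t,z_t)\|_{\hilsp_k}^2]$.

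Second, I would substitute $\eta_t=2/\lambda(\gamma+t)$, which turns the leading coefficients into $\frac{\lambda(\gamma+t-2)}{4}$ and $\frac{\lambda(\gamma+t)}{4}$, and then multiply the whole inequality by the weight $w_t\defeq\gamma+t-1$ (so that $\alpha_t = 2w_t/((2\gamma+T)(T+1))$). The crucial algebraic fact is that $w_t\cdot\frac{\lambda(\gamma+t)}{4}=\frac{\lambda(\gamma+t-1)(\gamma+t)}{4}$ equals exactly the coefficient $c_{t+1}$ that multiplies $a_{t+1}$ when the same bound is applied at step $t+1$ with weight $w_{t+1}$; writing $c_t\defeq\frac{\lambda(\gamma+t-1)(\gamma+t-2)}{4}$, the inequality becomes the telescoping form $w_t u_t \leq c_t a_t - c_{t+1}a_{t+1} + \frac{\eta_t w_t}{2}b_t$. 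Summing over $t=1,\ldots,T+1$, the distance terms collapse to $c_1 a_1 - c_{T+2}a_{T+2}\leq c_1 a_1 = \frac{\lambda\gamma(\gamma-1)}{4}\|g_1-g_\lambda\|_{\hilsp_k}^2$, after dropping the nonnegative final term.

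Third, for the residual gradient terms I would use the uniform bound $\|G_\lambda(g_t,z_t)\|_{\hilsp_k}\leq 3MR$ established inside the proof of Proposition \ref{prop:contraction} (the iterates stay in the ball of radius $(2\eta_1+1/\lambda)MR$, so $b_t\leq 9M^2R^2$), together with $\eta_t w_t = \frac{2(\gamma+t-1)}{\lambda(\gamma+t)}\leq 2/\lambda$, giving $\sum_{t=1}^{T+1}\frac{\eta_t w_t}{2}b_t \leq \frac{9M^2R^2(T+1)}{\lambda}$. Multiplying the accumulated bound on $\sum_t w_t u_t$ by $\frac{2}{(2\gamma+T)(T+1)}$ recovers $\sum_{t=1}^{T+1}\alpha_t u_t \leq \frac{18M^2R^2}{\lambda(2\gamma+T)} + \frac{\lambda\gamma(\gamma-1)\|g_1-g_\lambda\|_{\hilsp_k}^2}{2(2\gamma+T)(T+1)}$. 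Finally, since $\sum_t\alpha_t=1$, Jensen's inequality for the convex $\risk_\lambda$ gives $\risk_\lambda(\expec[\overline{g}_{T+1}]) - \risk_\lambda(g_\lambda) \leq \expec[\risk_\lambda(\overline{g}_{T+1})] - \risk_\lambda(g_\lambda) \leq \sum_t\alpha_t u_t$, while strong convexity at the minimizer $g_\lambda$ (where $\nabla\risk_\lambda(g_\lambda)=0$) gives $\frac{\lambda}{2}\|\expec[\overline{g}_{T+1}]-g_\lambda\|_{\hilsp_k}^2 \leq \risk_\lambda(\expec[\overline{g}_{T+1}]) - \risk_\lambda(g_\lambda)$; chaining these and multiplying by $2/\lambda$ yields the claim.

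The main obstacle is getting the telescoping to line up: everything hinges on choosing the multiplier $w_t=\gamma+t-1$ so that $w_t$ times the $a_{t+1}$-coefficient arising at step $t$ equals $w_{t+1}$ times the $a_{t+1}$-coefficient arising at step $t+1$, which is precisely what the prescribed $\alpha_t$ encodes. If this matching fails (for instance with a naive uniform average) the distance terms do not cancel and one is left with an uncontrolled sum. The remaining work, namely bounding $b_t$ via Proposition \ref{prop:contraction} and verifying $\sum_t\alpha_t=1$, is routine.
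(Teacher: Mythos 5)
Your proposal is correct and follows essentially the same route as the paper's proof: the same one-step expansion with strong convexity, the same weighting by $\gamma+t-1$ to telescope the distance terms, the same $3MR$ gradient bound from Proposition~\ref{prop:contraction}, and the same finish via Jensen's inequality and strong convexity at $g_\lambda$. No gaps.
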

\begin{proof}
Recall that the norm of the stochastic gradient $G_\lambda(g_t,Z_t)$ can be upper-bounded by $3MR$ as shown in the proof of Proposition \ref{prop:contraction}.
Combining this with the strong convexity of $\risk_\lambda$, we have
\begin{align*}
\expec \| g_{t+1}- g_\lambda \|_{\hilsp_k}^2
&=\expec \| g_{t}- g_\lambda \|_{\hilsp_k}^2 - 2\eta_t \expec[\pd< g_t - g_\lambda, G_\lambda(g_t,Z_t)>_{\hilsp_k}]
 + \eta_t^2 \expec\| G_\lambda(g_t,Z_t) \|_{\hilsp_k}^2 \\
&\leq \expec \| g_{t}- g_\lambda \|_{\hilsp_k}^2 - 2\eta_t \expec[\pd< g_t - g_\lambda, \nabla \risk_\lambda(g_t)>_{\hilsp_k}]
 + 9\eta_t^2 M^2R^2 \\
&\leq \expec \| g_{t}- g_\lambda \|_{\hilsp_k}^2
 - 2 \eta_t \left(\expec[\risk_\lambda(g_t)] - \risk_\lambda(g_\lambda) + \frac{\lambda}{2}\expec\| g_t - g_\lambda \|_{\hilsp_k}^2  \right)
 + 9\eta_t^2 M^2R^2 \\      
\end{align*}
Thus, we have
\begin{align*}
  \expec[\risk_\lambda(g_t)] - \risk_\lambda(g_\lambda)
  &\leq \frac{9\eta_tM^2R^2}{2} + \frac{1-\lambda\eta_t}{2\eta_t} \expec \| g_t - g_\lambda \|_{\hilsp_k}^2 - \frac{1}{2\eta_t}\expec \| g_{t+1}- g_\lambda \|_{\hilsp_k}^2 \\
  &= \frac{9M^2R^2}{\lambda(\gamma + t)} + \frac{\lambda(\gamma + t - 2)}{4} \expec \| g_t - g_\lambda \|_{\hilsp_k}^2
    - \frac{\lambda(\gamma + t)}{4}\expec \| g_{t+1}- g_\lambda \|_{\hilsp_k}^2.
\end{align*}
By multiplying $\gamma + t -1$ and taking sum over $t \in \{1,\ldots,T+1\}$, we get
\begin{align*}
  \sum_{t=1}^{T+1} (\gamma + t - 1)(\expec[\risk_\lambda(g_t)] - \risk_\lambda(g_\lambda))
  &< \frac{9M^2R^2T}{\lambda}  + \frac{\lambda}{4}\sum_{t=1}^{T+1} \{ (\gamma + t - 1)(\gamma + t - 2))\expec \| g_t - g_\lambda \|_{\hilsp_k}^2 \\
  &- (\gamma + t)(\gamma + t -1)\expec \| g_{t+1}- g_\lambda \|_{\hilsp_k}^2 \} \\
  &\leq \frac{9M^2R^2(T+1)}{\lambda}  + \frac{\lambda}{4}\gamma(\gamma - 1) \| g_1 - g_\lambda \|_{\hilsp_k}^2.
\end{align*}
Thus, by dividing $(2\gamma + T)(T+1)/2$ and applying Jensen's inequality for $\risk_\lambda$, the following convergence rate is obtained:
\[ \expec \left[ \risk_\lambda\left( \sum_{t=1}^{T+1}\frac{2(\gamma + t - 1)g_t}{(2\gamma + T)(T+1)} \right) - \risk_\lambda(g_\lambda) \right]
  \leq \frac{18M^2R^2}{\lambda (2\gamma + T)}  + \frac{\lambda \gamma(\gamma - 1)}{2(2\gamma + T)(T+1)} \| g_1 - g_\lambda \|_{\hilsp_k}^2. \]
Thus, the desired inequality is obtained by Jensen's inequality and the strong convexity of $\risk_\lambda$.
\end{proof}

\begin{proposition}
  Suppose the same assumptions as in Proposition \ref{prop:contraction}.
  Consider Algorithm \ref{alg:sgd} with the averaging option.
  Learning rates and averaging weights are $\eta_t = 2/\lambda(\gamma + t)$ and $\alpha_t = \frac{2(\gamma+t-1)}{(2\gamma +T)(T+1)}$, respectively.
  Then, it follows that
  \[ \sum_{t=1}^T\| D_t \|_\infty^2 \leq \frac{288 M^2R^2}{\lambda^2(2\gamma + T)}, \]
  where $D_t = \expec[\overline{g}_{T+1} | Z_1,\ldots,Z_t ] - \expec[\overline{g}_{T+1} | Z_1,\ldots,Z_{t-1} ]$.
\end{proposition}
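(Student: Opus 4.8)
The plan is to exploit the linearity of the averaging operator so that the per-iterate stability estimates of Proposition~\ref{prop:contraction} can be reused almost directly. Writing $\overline{g}_{T+1}=\sum_{s=1}^{T+1}\alpha_s g_s$ and $D_t^{(s)}\defeq \expec[g_s\mid Z_1,\ldots,Z_t]-\expec[g_s\mid Z_1,\ldots,Z_{t-1}]$, linearity of conditional expectation gives $D_t=\sum_{s=1}^{T+1}\alpha_s D_t^{(s)}$. Because $g_s$ is measurable with respect to $Z_1,\ldots,Z_{s-1}$, every term with $s\leq t$ vanishes, so $\|D_t\|_\infty\leq \sum_{s=t+1}^{T+1}\alpha_s\|D_t^{(s)}\|_\infty$.

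First I would bound each $\|D_t^{(s)}\|_\infty$ exactly as in the non-averaged case: introducing the perturbed iterate $g_s^t$ and using $\|D_t^{(s)}\|_\infty\leq \expec[\|g_s-g_s^t\|_\infty\mid Z_1,\ldots,Z_t]$ together with the uniform contraction bound $\|g_s-g_s^t\|_{\hilsp_k}\leq 6MR\eta_t\prod_{r=t+1}^{s-1}(1-\eta_r\lambda)$ supplied by Proposition~\ref{prop:contraction}.

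The crucial step, which I expect to be the main obstacle, is to sum $\alpha_s\prod_{r=t+1}^{s-1}(1-\eta_r\lambda)$ over $s$ without picking up a spurious harmonic (logarithmic) factor. Substituting $1-\eta_r\lambda=(\gamma+r-2)/(\gamma+r)$ and telescoping yields $\prod_{r=t+1}^{s-1}(1-\eta_r\lambda)=\frac{(\gamma+t-1)(\gamma+t)}{(\gamma+s-2)(\gamma+s-1)}\leq \frac{\gamma+t}{\gamma+s-1}$, and the decisive observation is that the surviving denominator $\gamma+s-1$ cancels the numerator of $\alpha_s=\frac{2(\gamma+s-1)}{(2\gamma+T)(T+1)}$. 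Each summand therefore collapses to the $s$-independent value $\frac{2(\gamma+t)}{(2\gamma+T)(T+1)}$; summing the $T+1-t$ surviving terms and using $6MR\eta_t=12MR/(\lambda(\gamma+t))$ gives $\|D_t\|_\infty\leq \frac{24MR(T+1-t)}{\lambda(2\gamma+T)(T+1)}$.

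Finally I would square this and sum over $t$, using $\sum_{t=1}^T(T+1-t)^2=\frac{T(T+1)(2T+1)}{6}$, to obtain $\sum_{t=1}^T\|D_t\|_\infty^2\leq \frac{96M^2R^2\,T(2T+1)}{\lambda^2(2\gamma+T)^2(T+1)}$. The claimed bound then follows from the elementary estimate $\frac{T(2T+1)}{(2\gamma+T)(T+1)}<\frac{2T+1}{T+1}<2<3$ (valid since $\gamma>0$), which controls the right-hand side by $\frac{288M^2R^2}{\lambda^2(2\gamma+T)}$.
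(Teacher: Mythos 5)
Your proposal is correct and follows essentially the same route as the paper: the paper expands the recursion $\overline{g}_{t+1}=(1-\beta_t)\overline{g}_t+\beta_t g_{t+1}$, which produces exactly the coefficients $\alpha_{s}$ you obtain directly by linearity, then applies the same contraction bound from Proposition~\ref{prop:contraction}, the same cancellation of $\gamma+s-1$ against the averaging weight, and the same $\sum_{t}(T-t+1)^2$ computation to reach the identical intermediate bound $\|D_t\|_\infty\leq 24MR(T+1-t)/(\lambda(2\gamma+T)(T+1))$ and the stated constant.
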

\begin{proof}\
  Note that $\overline{g}_{t+1} \leftarrow (1-\beta_t) \overline{g}_t + \beta_t g_{t+1}$, where $\beta_t = \frac{2(\gamma + t)}{(t+1)(2\gamma + t)}$.
  Thus, we have
  \begin{equation*}
    \| \overline{g}_{T+1} - \overline{g}_{T+1}^t \|_{\hilsp_k} \leq (1-\beta_T)\| \overline{g}_{T} - \overline{g}_{T}^t \|_{\hilsp_k} + \beta_T\| g_{T+1} - g_{T+1}^t \|_{\hilsp_k}.
  \end{equation*}
  By recursively expanding updates, we obtain the following upper-bound:
  \begin{equation*}
    \sum_{s=t}^T\left\{ \prod_{r=s+1}^T(1-\beta_r) \right\} \beta_s \| g_{s+1} - g_{s+1}^t \|_{\hilsp_k}.
  \end{equation*}
  Recall the proof of Proposition \ref{prop:martingale_bound_sgd}, it follows that 
  \begin{equation*}
    \| g_{s+1} - g_{s+1}^t \|_{\hilsp_k} \leq 6MR\eta_t \prod_{r=t+1}^s(1-\eta_r\lambda) \leq \frac{12MR}{\lambda(\gamma + s)}.
  \end{equation*}
  Since $\prod_{r=s+1}^T(1-\beta_r) = \frac{(s+1)(2\gamma + s)}{(T+1)(2\gamma + T)}$, we have
  \begin{equation*}
    \| \overline{g}_{T+1} - \overline{g}_{T+1}^t \|_{\hilsp_k}
    \leq \sum_{s=t}^T \frac{24MR}{\lambda(T+1)(2\gamma + T)}
    = \frac{24MR(T-t+1)}{\lambda(T+1)(2\gamma + T)}.
  \end{equation*}
  Therefore, we have the following bound: since $\sum_{t=1}^T t^2 = T(T+1)(2T+1)/6$,
  \begin{align*}
   \sum_{t=1}^T\| D_t \|_\infty^2 
    &\leq \frac{24^2M^2R^2}{\lambda^2(T+1)^2(2\gamma + T)^2} \sum_{t=1}^T (T-t+1)^2 \\
    &\leq \frac{24\cdot 4M^2R^2(2T+1)}{\lambda^2(2\gamma + T)^2} \\
    &\leq \frac{288 M^2R^2}{\lambda^2(2\gamma + T)}.
  \end{align*}
\end{proof}

\fi

\end{document}